%% file: neurips_2026.tex
\documentclass{article}
\usepackage{comment}
\PassOptionsToPackage{numbers, compress}{natbib}
\usepackage[preprint]{neurips_2026}

\usepackage[utf8]{inputenc} 
\usepackage[T1]{fontenc}    
\usepackage{url}            
\usepackage{booktabs}       
\usepackage{amsfonts}       
\usepackage{nicefrac}       
\usepackage{microtype}      
\usepackage{xcolor}         

\usepackage{amsmath}
\usepackage{amssymb}
\usepackage{mathtools}
\usepackage{amsthm}

\usepackage{microtype}
\usepackage{graphicx}
\usepackage{subcaption}
\usepackage{booktabs} 
\usepackage{xcolor}
\usepackage{colortbl}
\usepackage{makecell}

\usepackage{amsthm}
\theoremstyle{plain}
\newtheorem{theorem}{Theorem}[section]
\newtheorem{proposition}[theorem]{Proposition}
\newtheorem{lemma}[theorem]{Lemma}

\theoremstyle{definition}

\newtheorem{assumption}[theorem]{Assumption}
\theoremstyle{remark}

\usepackage[utf8]{inputenc}
\usepackage{tikz}
\usepackage{cancel}
\usepackage{enumitem}
\usepackage{algorithm}
\usepackage{algorithmic}

\usetikzlibrary{positioning, shapes.geometric, arrows.meta, calc}
\usepackage[most]{tcolorbox}
\usepackage{xspace}
\definecolor{mycitecolor}{RGB}{0, 102, 204}
\usepackage[colorlinks,linkcolor=red,citecolor=mycitecolor,filecolor=magenta]{hyperref}
\usepackage{cleveref}

\newtcolorbox{myframe}[1][]{
  enhanced,
  colback=white,              
  colframe=cyan!30!black,    
  boxrule=0.8pt,              
  arc=2pt,                   
  left=6pt,right=6pt,top=6pt,bottom=6pt,
  #1
}
\newcommand{\algfullname}{Guided Denoiser Self-Distillation\xspace}
\newcommand{\algname}{GDSD\xspace}
\usepackage[export]{adjustbox}

\title{GDSD: Reinforcement Learning as Guided Denoiser Self-Distillation for Diffusion Language Models}

\author{%
  Xiaohang Tang$^{*}$ \\
  UCL Dept. of Statistical Science \\
  UCL Centre for AI  
  \And Keyue Jiang$^{*}$ \\
  Alibaba Group \\
  UCL  Centre for AI \& Dept. of EEE 
  \And Che Liu \\
  Imperial College London
  \And Qifang Zhao \\
  Alibaba Group
  \And Xiaoxiao Xu \\
  Alibaba Group
  \AND Sangwoong Yoon \\
  UNIST
  \And Ilija Bogunovic$^{\dagger}$ \\
  University of Basel
}

\begin{document}

\maketitle

\begin{abstract}
Reinforcement learning (RL) can be used to improve the policy (denoiser) of diffusion large language models (dLLMs), while being hindered by the intractability of the policy likelihood.
A dominant and efficient family of methods replaces the likelihood in standard RL with its evidence lower bound (ELBO), estimated from randomly masked sequences.
Despite being well aligned with pre-training, these approaches introduce bias through training--inference mismatch by using the ELBO as a likelihood surrogate, which can degrade performance.
In this work, we propose \textbf{\algfullname (\algname)} to directly distill the denoiser of dLLMs from an advantage-guided self-teacher, derived from the closed-form optimum of reverse-KL regularized RL.
GDSD matches the dLLM's denoiser logits to the teacher's via a normalization-free objective, which reduces RL to likelihood-free self-distillation and thus bypasses the TIM biases.
Recent ELBO-based methods emerge as instances of applying different distillation divergences, but with diagnosable pathologies that GDSD avoids.
On planning, math, and coding benchmarks with LLaDA-8B and Dream-7B, GDSD consistently outperforms prior state-of-the-art ELBO-based methods with a more stable training reward dynamics, achieving test-accuracy improvements of up to $+19.6\%$. 
These results suggest that direct denoiser self-distillation, without relying on an ELBO likelihood surrogate, can provide a more stable and effective RL procedure for dLLMs. Code is available at \url{https://github.com/GaryBall/GDSD}.
\end{abstract}

\section{Introduction}
\label{intro}


Diffusion Large Language Models (dLLMs) have emerged as efficient alternatives to autoregressive models (ARMs). dLLMs generate multiple tokens in a single decoding step and do not follow a strictly left-to-right generation order, thereby improving generation efficiency and unlocking token dependencies beyond the left-to-right paradigm in ARMs. This promise is also reflected in several recent releases, which have scaled model size while substantially improving inference efficiency. Open-weight reasoning dLLMs have grown from 8B-parameter models \citep{nie2025large,zhu2025llada} to 100B parameters in LLaDA 2.0 \citep{bie2025llada2}, with inference reported to be more than $3\times$ faster than that of even smaller ARMs \citep{bie2026llada2}. Closed models such as Mercury similarly report speedups of up to $10\times$ relative to ARMs \citep{labs2025mercuryultrafastlanguagemodels}. Despite these efficiency gains, dLLMs still lag behind the state-of-the-art ARMs in generation quality, highlighting the need for effective fine-tuning methods, such as reinforcement learning (RL).

The central obstacle to RL with dLLMs is that the policy likelihood \emph{is intractable}. Two families of methods have emerged as strong-performing solutions. RL based on \textbf{trajectory likelihood} estimates the exact likelihood of the trajectory of a sequence in the reverse process by accumulating the transition probabilities \citep{huang2025reinforcing,wang2025revolutionizing,chen2025dultra,turok2026duel}. While unbiased in principle, these methods incur substantial computational cost per gradient step, being misaligned with the pre-training objectives. RL based on \textbf{sequence likelihood} applies likelihood evidence lower bound (ELBO) as a surrogate, estimated from randomly masked sequences \citep{zhao2025d1,yang2025mmada,zhu2025llada,tang2025wd1,ou2025principled,wang2025spg}. 
This approach is computationally efficient, and naturally aligned with the objective of current dLLMs pre-training, which is itself also defined on a randomly masked sequence. Thus, we focus on the sequence likelihood family in this work.

RL based on sequence likelihood with ELBO surrogate is inherently off-policy, yet existing methods attempt to correct it with importance sampling ratio based on ELBO. These methods naively plug the ELBO surrogate into policy gradient or PPO-style objectives in the place of true likelihood, introducing a well-known bias called training-inference mismatch (TIM) \citep{qi2025defeating,liu-li-2025-rl-collapse}. First, the non-negligible gap between the ELBO estimate and the true likelihood \citep{jiang2026diffusion} biases the importance ratio, which has been shown to degrade performance \citep{jiang2026diffusion}, and can even cause training collapse \citep{zhong2026stabilizing}. Additionally, dLLM rollouts are produced by iterative re-masking and block-wise decoders \citep{ye2025dream,nie2025large,arriola2025block}, whose induced sampling distribution differs from the training policy (i.e., ELBO). The importance-sampling formulation requires a tractable training-sampling relationship that ELBO does not provide, which motivates a different formulation.

To address the TIM bias, we shift from importance-sampling RL with the ELBO surrogate to direct, off-policy self-distillation on the denoiser. To our knowledge, this is the first principled method recasting RL for dLLMs as self-distillation to avoid TIM bias by design. Our contributions are:
\vspace{-.5em}
\begin{itemize}[leftmargin=1em]
    \item We reduce RL with dLLMs to denoiser self-distillation: under reverse-KL regularized RL, the closed-form optimal policy induces a guided denoising distribution, an advantage guided denoiser as self-teacher, which we aim to distill at each RL step. (Section \ref{sec:reformulation})
    \item  We propose \textbf{\algfullname (\algname)}, a squared-logit distillation loss with a normalization-free reformulation that eliminates the partition function via logit centralization. Distillation allows for off-policy update, thus bypassing TIM bias entirely (Sections \ref{sec:gdsd}, \ref{sec:norm_free})
    \item We provide insights on how existing ELBO-based methods are connected to alternative divergence choices under our distillation framework. We demonstrate that our squared-logit instance avoids both the data inefficiency of weighted ELBO instances (wd1 \citep{tang2025wd1}, DMPO \citep{zhu2025enhancing}) and the TIM bias of policy gradient instances (SPG \citep{wang2025spg}, UniGRPO \citep{yang2025mmada}, ESPO \citep{ou2025principled}). (Section \ref{sec:connection})
    \item  In experiments compared to prior state-of-the-art ELBO-based methods, GDSD shows more stable training reward dynamics. On planning tasks, GDSD with Dream-7B achieves test-accuracy gains of up to $+19.6\%$. With LLaDA-8B, GDSD also consistently improves performance across planning, math, and coding benchmarks, with gains ranging from $+0.6\%$ to $+5\%$. (Section \ref{sec:exp})
\end{itemize}

\input{neurips2026/preliminary}

\input{neurips2026/method}

\input{neurips2026/experiments}
\bibliography{ref}
\bibliographystyle{plainnat}


\input{neurips2026/appendix}



\end{document}

%% file: neurips2026/preliminary.tex
\section{Preliminaries}

In this section, we introduce the formulation of current diffusion large language models (dLLMs), namely masked diffusion models, and their reinforcement learning formulation.

\subsection{Masked Diffusion Models}
\label{sec:mdm}
Masked Diffusion Models (MDMs) \citep{sahoo2024simple} represent a prominent class of non-autoregressive generative models that operate over a discrete state space. Let $\mathcal{V}$ denote the categorical vocabulary and $[\text{M}]$ a dedicated mask token, such that the augmented space is $\mathcal{V}' = \mathcal{V} \cup \{[\text{M}]\}$. 

For a \textit{clean sequence} with length $N$, denoted by $x_0 \in \mathcal{V}^N$, the forward process $q(x_t | x_0)$ defines a continuous-time transition for $t \in (0, 1]$, where \textit{masked sequence} $x_t \in \mathcal{V'}^N$. Forward transition from state $x_s$ to $x_t$ ($0 \leq s < t$) is formulated as: $q(x_t | x_s) = \mathrm{Cat}(x_t; Q(s,t)^\top x_s)$ where $Q(s,t)$ is the transition kernel. The MDM, denoted by $p_\theta$, is then trained to model the reverse process. In particular, we term the reverse model for clean sequence prediction (i.e. $p(x_0 | x_t)$) as \textit{denoiser} in this work. 

Let $x_0^{(n)}$ denote the $n$-th token of the clean sequence, open large-scale MDMs, such as LLaDA-8B \citep{nie2025large} and Dream-7B \citep{ye2025dream}, directly model the token-level denoising distribution $p_\theta(x_0^{(n)} | x_t)$, and are pre-trained by \textit{maximizing} the following negative cross-entropy objective \citep{ou2025absorbingdiscretediffusionsecretly,shi2024simplified}:
\begin{align}
{L}(x_0^{(n)}; p_\theta) = \mathbb{E}_{t \sim \mathcal{U}[0,1],\ x_t \sim q(x_t | x_0)} 
\left[
w(t) \cdot \mathbf{1}(x_t^{(n)} = [\text{M}]) \log p_\theta(x_0^{(n)} | x_t)
\right].
\label{eq:md_elbo}
\end{align}
The sequence-level objective is obtained by accumulating the token-level objective, specifically
\(
{L}(x_0; p_\theta) = \sum_{n=1}^{N} {L}(x_0^{(n)}; p_\theta),
\)
which is equivalent to the likelihood evidence lower bound (ELBO). 
\Cref{eq:md_elbo}, the token-level contribution to the ELBO \citep{ou2025principled}, has also been used to approximate the token-level likelihood in reinforcement learning (RL) \citep{yang2025mmada,gong2025diffucoder}.

\subsection{RL for Masked Diffusion Models}


The goal of RL is to maximize a reward function $r$, which may be either a verifiable reward \citep{shao2024deepseekmath} or the output of a pre-trained reward model \citep{ouyang2022training}. Let $\pi$ denote policy, being the conditional probability of generating clean completion $x_0$ conditioned on prompt $c$ \footnote{In the rest of this paper, we omit the prompt notation $c$ in policy to abbreviate $\pi(x_0|c)$ as $\pi(x_0)$ for simplicity.}. Let $A(x_0)$ denote the advantage function, abbreviated as $A$. Advantage can be estimated with rewards: $A(x_0)=r(x_0)- \mathbb{E}_{\pi_\text{old}}[r(x_0)]$ \citep{shao2024deepseekmath,liu2025understanding}. The policy gradient (PG) and Proximal Policy Optimization (PPO) objectives are defined as
\begin{align}
\mathcal{L}_{\text{PG}}=\mathbb{E}_{ x_0 \sim \pi_{\text{old}}}
\big[
\frac{{\pi}_\theta(x_0)}{{\pi}_\text{old}(x_0)} A \big],\ \mathcal{L}_{\text{PPO}}=\mathbb{E}_{ x_0 \sim \pi_{\text{old}}}
\bigg[
\sum_{n=1}^N \min\big(\frac{{\pi}_\theta(x_0^{(n)})}{{\pi}_\text{old}(x_0^{(n)})} A, \text{clip}(\frac{{\pi}_\theta(x_0^{(n)})}{{\pi}_\text{old}(x_0^{(n)})}, 1\pm \epsilon) A \big)\bigg]. \nonumber
\end{align}
However, in dLLMs, both the token-level likelihood $\pi_\theta(x_0^{(n)})$ and the sequence-level likelihood $\pi_\theta(x_0)$ are intractable, making the direct application of RL difficult. \citet{zhao2025d1} propose a simplified likelihood approximation by fully masking the completion meanwhile randomly masking the prompt, but this has been shown to underperform ELBO-based methods \citep{ou2025principled}.

\subsection{RL with Likelihood Surrogate ELBO}
\label{sec:tim}
A widely adopted family of methods estimates ELBO (\Cref{eq:md_elbo}) and uses it as a surrogate data likelihood for RL  \citep{gong2025diffucoder,yang2025mmada,ou2025principled,bie2026llada2}. In each RL step, after sampling a clean sequence $x_0$, $t$ proportion of tokens are randomly selected to mask for computing an ELBO estimate. There are various advantages of these methods.
\vspace{-.5em}
\begin{itemize}[leftmargin=.9em]
    \item  First, ELBO estimation is computationally \textit{efficient}. The masking operation itself is extremely cheap, requiring only random token selection. The main computational cost comes from model inference on a fixed number (Monte-Carlo sample size) of masked sequences. In practice, however, RL remains effective even with a small sample size (e.g. $4$) \citep{yang2025mmada,ou2025principled}.
    \item Moreover, ELBO objective is well \textit{aligned with the pre-training}, since both objectives rely on randomly masked sequence (\Cref{sec:mdm}). This makes RL using a randomly masked sequence a principled and widely adopted choice.
\end{itemize}
Despite these strengths, RL with ELBO surrogates introduces non-negligible bias that can potentially cause training collapse, which we diagnose in the next section.



\subsection{Bias in ELBO-based Objectives}  

In RL based on sequence likelihood, ELBO surrogate and the special decoding process of dLLMs jointly introduce a mismatch between the policy optimized during training and the policy used at inference, termed \textit{Training-Inference Mismatch (TIM)} \citep{qi2025defeating}. First, the ELBO is a lower bound of the true likelihood with a non-negligible gap \citep{jiang2026diffusion} and estimation error, which leads to potentially large bias and variance in the likelihood ratio \citep{tang2025wd1,zhong2026stabilizing}, further degrade performance \citep{jiang2026diffusion} and even cause training collapse \citep{zhong2026stabilizing}. Second, prevailing dLLM decoders are iterative re-masking and block-wise, producing a sampling distribution differs from the training policy. 

Formally, RL methods such as policy gradient (PG) leverage the importance-sampling ratio ${\pi_\theta(x_0)}/{{\pi_{\text{old}}}(x_0)}$ for importance sampling following distribution ${\pi_{\text{old}}(x_0)}$. But current methods replace marginal likelihood with ELBO: $\textcolor{red!80}{\hat{\pi}_{\text{old}}(x_0) = {L}(x_0; p_\text{old})}$. The sampling process discretizes the continuous reverse process into $K$ steps $\{t_1, \cdots, t_K\} \in [0, 1]$, inducing a sampling distribution:
\begin{align}
\textcolor{blue!80}{\pi^\text{rm}_{\text{old}}(x_0)} = \prod_{k=1}^{T} \underbrace{m(\sigma_{k} \mid x_{t_k}; p_\text{old})}_{\text{remasked token selection}} \cdot \prod_{n \in \sigma^c_{k}} \underbrace{p_\text{old}(x_0^{(n)} \mid x_{t_k})}_{\text{token prediction}},
\end{align}
where $m$ is the selection policy to decide which tokens to remask, such as low-confidence selection, and $\sigma^c_k$ denotes the set of unmasked tokens, the complement of the predetermined remasking set $\sigma_k$. This leads to a clear mismatch $\textcolor{blue!80}{{\pi}^\text{rm}_\text{old}(x_0)} \neq \textcolor{red!80}{\hat{\pi}_\text{old}(x_0)}$, further causing the likelihood-ratio correction ineffective and objective bias in ELBO-based methods that are based on PG or PPO \citep{wang2025spg,ou2025principled}:
\begin{align}
\mathcal{L}_{\text{PG}}=\mathbb{E}_{ x_0 \sim \textcolor{blue!80}{\pi^\text{rm}_{\text{old}}}}
\big[
\frac{\hat{\pi}_\theta(x_0)}{\textcolor{red}{\hat{\pi}_\text{old}(x_0)}} A \big],\ \mathcal{L}_{\text{PPO}}=\mathbb{E}_{ x_0 \sim \textcolor{blue!80}{\pi^\text{rm}_{\text{old}}}}
\bigg[
\sum_{n=1}^N \min\big(\frac{\hat{\pi}_\theta(x_0^{(n)})}{\textcolor{red!80}{\hat{\pi}_\text{old}(x_0^{(n)})}} A, \text{clip}(\frac{\hat{\pi}_\theta(x_0^{(n)})}{\textcolor{red!80}{\hat{\pi}_\text{old}(x_0^{(n)})}}, 1\pm \epsilon) A \big)\bigg]. \nonumber
\end{align}
Such TIM bias, including even those arising from numerical rounding errors, has been shown to lead to catastrophic training collapse \citep{zhong2026stabilizing,qi2025defeating,liu-li-2025-rl-collapse}. To address this issue, we propose reducing the RL to an equivalent off-policy self-distillation problem without relying on likelihood.



%% file: neurips2026/method.tex
\section{Method}

\subsection{RL as Guided Denoising}
\label{sec:reformulation}
Prominent RL methods such as PPO and GRPO \citep{schulman2015trust,sutton1999policy,shao2024deepseekmath} restrict the policy update through a clipping operator on the importance ratio, which is well-suited to autoregressive policies whose likelihoods are tractable. For dLLMs, since likelihood is intractable, we instead adopt a reverse-KL penalty, which likewise preserves the monotonic improvement property \citep{tang2025wd1}:
\begin{align}
\max_{\pi} \; \mathbb{E}_{x_0 \sim \pi} [ \psi A(x_0) -  D_{{\text{KL}}}(\pi \| \pi_{\text{old}} ) - \beta D_{\text{KL}} ( \pi \,\|\, \pi_{{\text{ref}}} ) ],
\end{align}
where $\psi$ is the guidance coefficient, $\beta$ is the regularization coefficient, and $\pi_{\text{ref}}$ is the (frozen) reference model.
Using the method of Lagrange multipliers, the objective has a closed-form solution $\pi^*$, satisfying for any output clean sequence $x_0$:
\begin{align}
\pi^{*}(x_0) \propto \pi_{\text{old}}(x_0)^{{(1 - \beta)}} \cdot \pi_{{\text{ref}}}(x_0)^{{\beta}}  \cdot
\exp\big( \psi A(x_0) \big).
\label{eq:rev_grpo_update}
\end{align}
Here $\pi^*$ denotes the optimal policy of the reverse-KL regularized RL problem, a marginal distribution over clean sequences. In this work, we aim to learn a masked diffusion model (MDM) whose induced sampling distribution matches $\pi^*$. To achieve this, we first restrict the \textit{training-time} forward process: 
\begin{assumption}[Consistent Forward Process] We apply the same forward process masking on all the diffusion models (policies including reference model, old model, and current parametrized model). For simplicity, we let $q(x_t|x_0)$ denote the forward process. 
\label{def:idendp}
\end{assumption}
Assumption \ref{def:idendp} genuinely reflects the practice of likelihood approximation with randomly masked sequences, i.e., obtaining the same batch of masked sequences to compute the ELBO for all models ($p_\theta$, $p_\text{old}$, $p_\text{ref}$) \citep{yang2025mmada,ou2025principled}. Based on a consistent forward process, the RL policy update can then be converted into guided sampling: \looseness=-1
\begin{lemma}[\textbf{Energy-Guided Denoising Distribution} \citep{lu2023contrastive,tang2025wd1}]
Based on consistent masking in Assumption \ref{def:idendp}, the closed-form solution to reverse-KL regularized RL in \Cref{eq:rev_grpo_update} induces an energy-guided MDM $p^*$, which we denote as a \textbf{teacher} denoiser, satisfying $\forall x_0, t, x_t$: 
\begin{align}
p^*(x_0|x_t) = p_\text{old}^\text{ref}(x_0|x_t) \cdot \exp(\psi A(x_0) - A_t(x_t)),
\label{eq:inter_energy}
\end{align}
where $p_\text{old}^\text{ref}(x_0|x_t)\propto p_{\text{old}}(x_0|x_t)^{1-\beta} p_{\text{ref}}(x_0|x_t) ^{\beta}$, the negative energy guidance $A(\cdot)$ is the advantage function, and the log-normalization-constant $A_t(x_t) = \log \mathbb{E}_{x_0 \sim p_\text{old}^\text{ref}(\cdot|x_t)}[\exp \big(\psi A( x_0)\big)]$.
\label{lemma:ieg}
\end{lemma}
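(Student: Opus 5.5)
Our plan is to show that, under Assumption~\ref{def:idendp}, the target marginal $\pi^*$ of \Cref{eq:rev_grpo_update} and the corresponding posterior over clean sequences given a masked state are linked by Bayes' rule through the shared forward kernel $q(x_t|x_0)$. Accordingly, we define the teacher denoiser as the posterior of $x_0$ given $x_t$ under the joint $\pi^*(x_0)q(x_t|x_0)$. That is,
\begin{align}
p^*(x_0|x_t) = \frac{\pi^*(x_0)\,q(x_t|x_0)}{\sum_{x_0'}\pi^*(x_0')\,q(x_t|x_0')}. \nonumber
\end{align}
We treat $p_\text{old}$ and $p_\text{ref}$ in the same way, as the exact posteriors of $\pi_\text{old}$ and $\pi_\text{ref}$ under the same $q$; consistent masking is precisely what makes this legitimate.

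First, we substitute \Cref{eq:rev_grpo_update} into the numerator. This gives $p^*(x_0|x_t)\propto \pi_\text{old}(x_0)^{1-\beta}\pi_\text{ref}(x_0)^{\beta}e^{\psi A(x_0)}q(x_t|x_0)$. Second, since $q(x_t|x_0)=q(x_t|x_0)^{1-\beta}q(x_t|x_0)^{\beta}$, we split the kernel geometrically. This rewrites the expression as $\big(\pi_\text{old}(x_0)q(x_t|x_0)\big)^{1-\beta}\big(\pi_\text{ref}(x_0)q(x_t|x_0)\big)^{\beta}e^{\psi A(x_0)}$. Applying Bayes once more, each factor $\pi(x_0)q(x_t|x_0)$ equals $p(x_0|x_t)\,p(x_t)$. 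The marginals $p_\text{old}(x_t)^{1-\beta}p_\text{ref}(x_t)^{\beta}$ depend only on $x_t$, so they are absorbed into the proportionality constant. We are left with $p^*(x_0|x_t)\propto p_\text{old}(x_0|x_t)^{1-\beta}p_\text{ref}(x_0|x_t)^{\beta}e^{\psi A(x_0)}$.

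Third, we normalize. Writing $p_\text{old}^\text{ref}(x_0|x_t)$ for the normalized geometric mixture, we get $p^*(x_0|x_t)= p_\text{old}^\text{ref}(x_0|x_t)e^{\psi A(x_0)}/Z_t(x_t)$. The constant is $Z_t(x_t)=\sum_{x_0}p_\text{old}^\text{ref}(x_0|x_t)e^{\psi A(x_0)}=\mathbb{E}_{p_\text{old}^\text{ref}(\cdot|x_t)}[e^{\psi A(x_0)}]$. Setting $A_t=\log Z_t$ then yields \Cref{eq:inter_energy}, which is the form in \citet{lu2023contrastive}.

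The main obstacle is justifying that the MDM denoisers are exact posteriors of their marginals under a common $q$. This step is needed for the marginal normalizers to cancel as functions of $x_t$ alone. We would state this as the idealization carried by Assumption~\ref{def:idendp}, namely perfectly trained models sharing the forward process. We would also note that the masking kernel is independent of the model, so no $\beta$-dependent kernel terms survive. The mask-consistency support condition, that $q(x_t|x_0)>0$ only when $x_0$ agrees with $x_t$ on unmasked positions, requires no special handling because it enters all factors identically.
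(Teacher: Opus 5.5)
Your proof is correct and follows the route the paper relies on. The paper gives no standalone proof and defers to \citet{lu2023contrastive,tang2025wd1}, where the energy-guided posterior is obtained exactly as you do: by Bayes' rule through a shared forward kernel, with every $x_t$-only factor absorbed into $A_t(x_t)$.

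Your split $q(x_t|x_0)=q(x_t|x_0)^{1-\beta}q(x_t|x_0)^{\beta}$ usefully makes explicit why the base denoiser is the normalized geometric mixture $p_\text{old}^\text{ref}$, a step the citation leaves implicit. You also correctly flag that the argument needs each denoiser to be the exact posterior of its marginal under $q$. The paper uses this idealization tacitly, but Assumption~\ref{def:idendp} by itself only fixes a common $q$. So your phrase ``consistent masking is precisely what makes this legitimate'' slightly overstates what the assumption provides.
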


In particular, \Cref{eq:inter_energy} shows that the teacher denoising denoiser $p^*$ is a MDM with base model $p_{\text{old}}^{\text{ref}}$, guided by the energy function $\mathcal{E}(x_0)=-\psi A(x_0)$. Consequently, we can directly distill from the guided self-teacher $p^*$, and thus reduce the RL policy optimization problem to self-distillation to bypass the bias in ELBO-based methods.

\subsection{Guided Denoiser Self-Distillation}
\label{sec:gdsd}

\begin{figure}[t!]
    \centering
    \includegraphics[width=.99\linewidth]{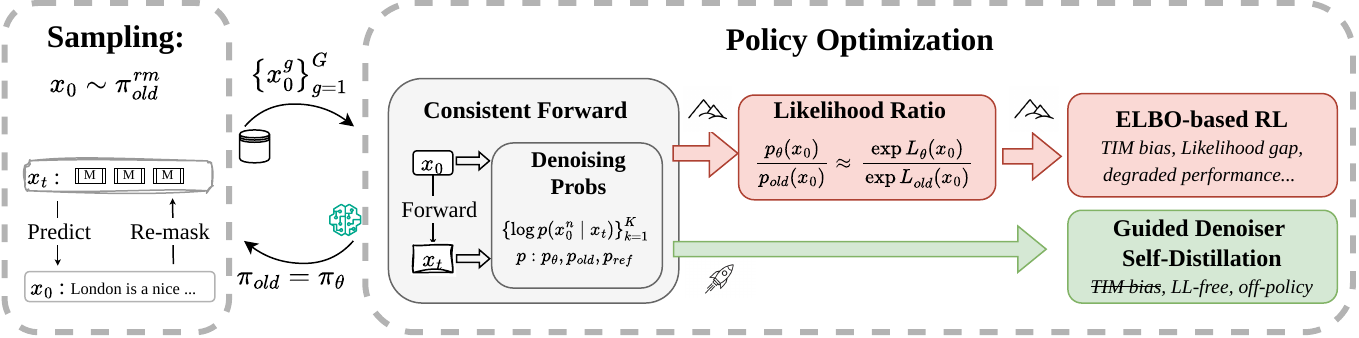}
    \caption{Overview of reinforcement learning for diffusion Large Language Models (dLLMs). Prior methods are based on ELBO $L_\theta(x_0)$ to approximate the likelihood ratio, which introduces training-inference-mismatch (TIM) bias. Our method \algfullname conducts denoiser self-distillation directly, which bypasses likelihood computation and TIM bias.
    }
    \label{fig:dllm_rft} 
\end{figure}

To directly approximate the guided denoising distribution $p^*$ for masked diffusion models (MDMs), we propose \textbf{\algfullname (\algname)}, a framework that trains $p_\theta$ by distilling from the teacher energy-guided denoiser $p^*$. Concretely, \algname minimizes the distance between the parametrized denoiser $p_\theta(x_0 | x_t)$ and the teacher distribution $p^*(x_0 | x_t)$ via logit matching \citep{hinton2015distilling}: \looseness=-1
\begin{myframe}[title=\algfullname (\algname)]
\begin{align}
\mathbb{E}_{t \sim U(0,1),\ x_0 \sim \pi_{\text{old}},\ x_t \sim q(\cdot|x_0)}\Big[ \log p_\theta(x_0 |x_t) -    \underbrace{ \big( \log p_\text{old}^\text{ref}(x_0|x_t) + \psi A(x_0) -  A_t(x_t) \big) }_{\log p^*(x_0|x_t)}  \Big]^2.
\label{eq:egpo}
\end{align}
\end{myframe}
The sequence-level denoising distribution factorizes into token-level denoising models \cite{shi2025simplifiedgeneralizedmaskeddiffusion,zhao2025d1,yang2025mmada,tang2025wd1}: $\log p_\theta(x_0| x_t) = \sum_{n=1}^N \log p_\theta(x_0^{(n)}| x_t)$, where $p_\theta(x_0^{(n)} | x_t)$ is the token-level denoiser of the current dLLMs. The advantage function $A$ is estimated from rewards. The main challenge in directly optimizing \Cref{eq:egpo} is the intractable normalization constant $A_t$, which we eliminate in \Cref{sec:practical_design} with normalization-free reformulation. In the rest of this section, we highlight a few advantages of \algname.

\textbf{RL via Denoiser Self-Distillation.} \algname retains the main advantages of ELBO-based methods that leverage randomly masked sequences: computational efficiency and natural alignment with the pre-training objective. Unlike prior approaches, however, \algname eliminates the need for ELBO due to its likelihood-free formulation. The objective leverages off-policy samples, sampled through iterative self-refinement, and distills the dLLM denoiser from a teacher, obtained from dLLM's old logits shifted by the advantage. Such guided self-distillation can increase the denoising probability of clean samples with a positive advantage, and decrease it for those with a negative advantage (i.e., negative samples). As a result, \algname naturally handles negative samples without requiring additional designs, in contrast to prior works \citep{tang2025wd1,wang2025spg}.

\textbf{Bypassing TIM Bias.}  \algname does not form a likelihood ratio in its objective, so the two structural biases of importance-sampling (\Cref{sec:tim}), including the ELBO–likelihood gap and the re-masking sampler–policy gap, never enter the loss objective by design. Distillation originally allows for training on off-policy samples, which absorbs the potential harms by the special process of dLLMs decoding.


\textbf{Minor Changes to RL Pipeline.} \algname can be implemented with only minor modifications to existing RL pipelines. We summarize the standard RL pipeline for dLLMs in \Cref{fig:dllm_rft}.  In the sampling stage, we follow existing dLLM RL workflows (e.g., Diffu-GRPO \citep{zhao2025d1}), using $\pi_{\text{old}}$ to generate a group of sequences through iterative prediction and re-masking. We then follow the standard sequence-likelihood computing procedure: we randomly sample several time steps $t$ and apply a consistent forward process by drawing $x_t \sim q(\cdot | x_0)$. At each sampled step, we compute the denoising log-probabilities $\{\log p(x_0^k | x_t)\}_{k=1}^K$ under all relevant policies, namely $\pi_{\text{old}}$, $\pi_{\text{ref}}$, and $\pi_\theta$. ELBO-based methods use these denoising probabilities to estimate the ELBO and then substitute it for the likelihood in the RL objective. By contrast, \algname is likelihood-free and computes its loss directly from the collected denoising probabilities. \looseness=-1

\subsection{Normalization-Free Optimization} 
\label{sec:norm_free}

The main challenge in implementing \algname in \Cref{eq:egpo} is computing the normalization constant in the teacher denoiser $p^*$. We address this by firstly decomposing the geometric mixture model $p_{\text{old}}^{\text{ref}}$ into $p_{\text{old}}$ and $p_{\text{ref}}$ , and gather the normalization term of mixture policy $p_{\text{old}}^{\text{ref}}$ and $A_t$ into a single constant $Z_t$: 
\begin{align}
\log {p}^*(x_0|x_t) = (1-\beta) \log p_\text{old}(x_0|x_t) + \beta \log p_\text{ref}(x_0|x_t) + \psi A(x_0) - \colorbox{red!30}{$\log Z_t(x_t)$},
\label{eq:egdd_upper}
\end{align}
where $Z_t$ is the partition function, which requires a summation over the exponential completion space $\mathcal{X}$:
$$ Z_t(x_t) = \sum_{x_0 \in \mathcal{X}} p_{\text{old}}(x_0|x_t)^{(1-\beta)} p_{\text{ref}}(x_0|x_t)^{\beta} \exp \big( \psi A(x_0) \big).$$
Estimating $Z_t$ requires drawing additional samples of $x_0$ (e.g., sampling from $p_\text{old}(x_0|x_t)$) and performing model inference on them, resulting in extra computational overhead. To make \algname efficient for online RL, we therefore propose two normalization-free logit-matching methods to bypass the bottleneck of computing $Z_t$ by exploiting the translation invariance of the Softmax operator.



\textbf{Naive Solution: Direct Matching.} 
A naive method is to directly match the student logits to the unnormalized teacher denoiser in \Cref{eq:egdd_upper} without the
partition term $\log Z_t$. This is motivated by the translation invariance of the Softmax operator:
$\operatorname{Softmax}(\mathbf{y})=\operatorname{Softmax}(\mathbf{y}+c)$ for any constant $c$. Therefore, given masked sequence $x_t$, if the logits of the student denoiser match the teacher's up to an additive term independent of $x_0$, the resulting denoising distribution is identical to $p^*$ after Softmax.


\textbf{Token-level Logit Centralization (TLC).} Direct matching is simple and effective. However, during iterative matching to the unnormalized target, the scale of the logits can shift uncontrollably. Inspired by zero-meaned logit matching by \citet{hinton2015distilling}, we propose an alternative method: centralizing the logits of denoisers. For any MDM $p$, we define the token-level logit-centralized model $\bar{p}$ as:
\begin{align}
\log \bar{p}(x_0|x_t) \overset{\text{def}}{=}  \sum_n^{N} \log p(x^{(n)}_0|x_t) - \sum_n^{N} \frac{1}{|\mathcal{V}|} \sum_{x'^{(n)}_0 \in \mathcal{V}} \log p(x'^{(n)}_{0}|x_t)
\label{eq:def_lc}
\end{align} 
Due to the factorization of the sequence-level denoising distribution, TLC is equivalent to centralizing the sequence-level log-probability: $\log \bar{p}(x_0|x_t) = \log p(x_0|x_t) - \sum_{x_0} \log p(x_0|x_t)$. By the translation invariance of the softmax operator, the denoising probability distribution induced by these centralized logits is identical to the one induced by the raw logits. Since the partition function $Z_t$ is independent of $x_0$, centralization eliminates $Z_t$ entirely from the objective:
\begin{proposition} 
Define the advantage baseline $b:=\sum_{x_0 \in \mathcal{X}} A(x_0) / |\mathcal{X}|$. For any clean and masked sequence $x_0$ and $x_t$, the centralized logits of $p^*$ becomes normalization-free (Proof in \ref{sec:proof_centering}):
\begin{align}
\log \bar{p}^*(x_0|x_t) = (1-\beta)\log \bar{p}_\text{old} (x_0|x_t) + \beta \log \bar{p}_\text{ref} (x_0|x_t) + \psi {A}(x_0) - b.
\label{eq:centering}
\end{align}
\label{proposition:centering}
\end{proposition}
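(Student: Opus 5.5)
The plan is to treat centralization as a linear operator on log-probabilities and apply it term by term to \Cref{eq:egdd_upper}. I will work with the sequence-level form stated just before the proposition, where centralizing a log-probability means subtracting its uniform average over the completion space $\mathcal{X}$:
\begin{align}
\log \bar{p}(x_0|x_t) = \log p(x_0|x_t) - \frac{1}{|\mathcal{X}|}\sum_{x_0' \in \mathcal{X}} \log p(x_0'|x_t). \nonumber
\end{align}
Write $C[f](x_0) := f(x_0) - \frac{1}{|\mathcal{X}|}\sum_{x_0'} f(x_0')$ for any function $f$ on $\mathcal{X}$ with $x_t$ held fixed. The operator $C$ has two properties that drive the whole argument. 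First, it is linear. Second, it annihilates any function that is constant in $x_0$.

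The first step is to check that the sequence-level centralization agrees with the token-level definition in \Cref{eq:def_lc}. The log-probability factorizes as $\sum_n \log p(x_0^{(n)}|x_t)$ and $\mathcal{X} = \mathcal{V}^N$. Hence the uniform average over $\mathcal{X}$ of $\log p(x_0^{(n)}|x_t)$ depends only on the $n$-th coordinate. It therefore equals $\frac{1}{|\mathcal{V}|}\sum_{v \in \mathcal{V}} \log p(v|x_t)$, and summing over $n$ recovers \Cref{eq:def_lc}. I also read the paper's shorthand ``$-\sum_{x_0}\log p$'' as this average, with a $1/|\mathcal{X}|$ factor. This reading is the one consistent with the definition of $b$ as an average.

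The second step applies $C$ to both sides of \Cref{eq:egdd_upper}. By linearity,
\begin{align}
\log \bar{p}^*(x_0|x_t) = (1-\beta)\, C[\log p_\text{old}] + \beta\, C[\log p_\text{ref}] + \psi\, C[A] - C[\log Z_t]. \nonumber
\end{align}
Three identifications then finish the computation.
\begin{enumerate}
\item $\log Z_t(x_t)$ does not depend on $x_0$, so $C[\log Z_t] = 0$.
\item $C[\log p_\text{old}] = \log \bar{p}_\text{old}$ and $C[\log p_\text{ref}] = \log \bar{p}_\text{ref}$ hold by definition.
\item $C[A](x_0) = A(x_0) - \frac{1}{|\mathcal{X}|}\sum_{x_0'} A(x_0') = A(x_0) - b$.
\end{enumerate}
These give \Cref{eq:centering}, up to the placement of $\psi$: the computation produces $\psi(A - b)$. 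If the statement's $b$ is meant literally rather than scaled, one either absorbs $\psi$ into the baseline or sets $\psi = 1$. I would flag this coefficient as a notational point.

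The main obstacle is a subtlety about applying the token-level (TLC) centralization to $p^*$ itself. The teacher $p^*$ need not factorize over tokens, because $A(x_0)$ couples the positions. The token-level definition \Cref{eq:def_lc} therefore does not directly apply to $p^*$. I would resolve this by defining $\bar{p}^*$ through the sequence-level average over $\mathcal{X}$. The first step shows that this coincides with TLC for the factorized models $p_\text{old}$, $p_\text{ref}$ and $p_\theta$, so the objective remains consistent.

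Finally, I will note that $\bar{p}^*$ and $p^*$ induce the same distribution after softmax over $\mathcal{X}$. This holds because they differ only by an $x_0$-independent shift, which justifies using \Cref{eq:centering} as the matching target.
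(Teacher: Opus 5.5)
Your proposal is correct and follows essentially the same route as the paper's proof: centralize at the sequence level over $\mathcal{X}$, substitute the log-form of $p^*$, let the $x_0$-independent normalizer drop out, and regroup the old and reference terms into $\log\bar{p}_\text{old}$ and $\log\bar{p}_\text{ref}$. Both of your flags are well taken: the computation does yield $-\psi b$ rather than $-b$ (the paper's own final line also omits the $\psi$ and the $1/|\mathcal{X}|$), and defining $\bar{p}^*$ through the sequence-level average is the right fix, since $p^*$ does not factorize over tokens even though the paper justifies this step by appeal to the token-level lemma.
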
 
\vspace{-1.3em}
Since the advantage baseline $b$ is independent of $x_0$, diffusion time $t$, and even masked sequence $x_t$, being a globally constant offset to all the logits at all time $t$, in practice we can further omit the baseline $b$ directly, meanwhile preserving the resulting denoising distribution. Additionally, since the logits are centralized and the advantage itself has zero mean, i.e., $\mathbb{E}_{\pi_\text{old}}[A(x_0)]=0$, the scale of the logits is prevented from drifting away during iterative updates. Combined with normalization-free implementation, we derive the practical objective of GDSD (additional details in \ref{sec:practical_design}):
\begin{myframe}[title=\algfullname (\algname) (practical)]
    \begin{align}
\mathbb{E}_{{t \sim U(0,1),\ x_0 \sim \pi^{\text{rm}}_{\text{old}},\ x_t \sim q(\cdot|x_0)}}\Bigg[ \Big( \log \frac{\bar{p}_\theta(x_0|x_t)}{\bar{p}_{\text{old}}(x_0|x_t)} - \psi {A}(x_0) \Big)^2 +  \beta \Big( \log \frac{\bar{p}_\theta(x_0|x_t)}{\bar{p}_\text{ref}(x_0|x_t)}\Big)^2 \Bigg].
\label{eq:egdd_practical}
\end{align}
\end{myframe}

\section{Connection and Comparison to Related Work}
\label{sec:connection}

In this section, we draw the important connections between our method and closely related work.

\algname applies logit matching to approximate the teacher denoising distribution $p^*$ with $p_\theta$. 
Specifically, the objective is the squared $l_2$ distance between the logit vectors of the student denoiser and the teacher with samples from a mixture of the old and reference policy. 
Apart from $l^2$, it is also eligible to apply other functions for self-distillation, such as forward-KL and reverse-KL divergence. 

\textbf{Advantage-Weighted ELBO.} Adopting forward-KL divergence for distillation derives advantage-weighted ELBO objectives. This formulation recovers the objectives employed in wd1 (AW-DCE) \citep{tang2025wd1} and DMPO \citep{zhu2025enhancing}. Define the forward-KL objective as:
$$\mathcal{L}_{\text{fwd}}(\theta) \overset{\text{def}}{=} \mathbb{E}_{t \sim U(0,1),\ x_t \sim p^*_t} [D_{\text{KL}}(p^*(\cdot|x_t) \parallel p_\theta(\cdot|x_t))]. $$
Since the partition function $A_t(x_t)$ and $p_{\text{old}}^{\text{ref}}$ terms in $p^*$ are independent of $\theta$, then importance sampling reveals this is equivalent to the ELBO re-weighted by the exponential of the advantage:
\begin{proposition}[Forward-KL Distillation Induces Advantage-Weighted ELBO]
\label{prop:forward_kl_aw_elbo}
Up to terms independent of $\theta$, optimizing $\mathcal L_{\rm fwd}$ is equivalent to an ELBO-style denoising loss reweighted by the exponential advantage (proof in \ref{proof:forward_kl_aw_elbo}):
\begin{equation}
\nabla_\theta \mathcal L_{\text{fwd}}(\theta)
\propto
\nabla_\theta
\mathbb E_{x_0 \sim p^{\text{ref}}_{\text{old}}}
\Big[
\exp(\psi A(x_0))
\underbrace{\mathbb E_{t \sim U(0,1),\, x_t \sim q(\cdot \mid x_0)}
\big[
-\log p_\theta(x_0 \mid x_t)
\big]}_{\text{ELBO}}
\Big].
\nonumber
\end{equation}
\vspace{-1.5em}
\end{proposition}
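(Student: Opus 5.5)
The plan is to expand the forward-KL objective, discard the $\theta$-independent entropy term, and then use Lemma~\ref{lemma:ieg} to rewrite the teacher expectation as an exponentially weighted expectation under the base denoiser. Writing $D_{\mathrm{KL}}(p^*(\cdot|x_t)\,\|\,p_\theta(\cdot|x_t)) = \sum_{x_0} p^*(x_0|x_t)\log p^*(x_0|x_t) - \sum_{x_0} p^*(x_0|x_t)\log p_\theta(x_0|x_t)$, the first term does not depend on $\theta$. Its gradient therefore vanishes, and
\begin{equation}
\nabla_\theta \mathcal L_{\rm fwd}(\theta) = \nabla_\theta\, \mathbb E_{t\sim U(0,1),\,x_t\sim p^*_t}\,\mathbb E_{x_0\sim p^*(\cdot|x_t)}\big[-\log p_\theta(x_0|x_t)\big]. \nonumber
\end{equation}

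The key step is to recognize the joint law of $(x_t,x_0)$ under $p^*_t(x_t)\,p^*(x_0|x_t)$. I interpret $p^*_t$ as the time-$t$ marginal of the forward process started from $\pi^*$; this is the consistent forward process of Assumption~\ref{def:idendp}. The joint law is then $\pi^*(x_0)\,q(x_t|x_0)$, so the nested expectation becomes $\mathbb E_{x_0\sim\pi^*}\mathbb E_{x_t\sim q(\cdot|x_0)}[-\log p_\theta(x_0|x_t)]$.

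Next I substitute the closed form \Cref{eq:rev_grpo_update}. Let $\pi^{\rm ref}_{\rm old}\propto \pi_{\rm old}^{1-\beta}\pi_{\rm ref}^{\beta}$, which the statement denotes $p^{\rm ref}_{\rm old}$ at the level of clean sequences. Then $\pi^*(x_0)=\pi^{\rm ref}_{\rm old}(x_0)\exp(\psi A(x_0))/Z$ with a global constant $Z$ independent of $\theta$. Importance reweighting from $\pi^*$ to $\pi^{\rm ref}_{\rm old}$ yields
\begin{equation}
\nabla_\theta\mathcal L_{\rm fwd}(\theta)=\frac{1}{Z}\nabla_\theta\,\mathbb E_{x_0\sim p^{\rm ref}_{\rm old}}\Big[\exp(\psi A(x_0))\,\mathbb E_{t,\,x_t\sim q(\cdot|x_0)}\big[-\log p_\theta(x_0|x_t)\big]\Big], \nonumber
\end{equation}
which is the claimed proportionality. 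Exchanging gradient and expectation is justified because the sampling distributions do not depend on $\theta$.

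The main obstacle is making the identification of the joint law rigorous. The teacher denoiser in Lemma~\ref{lemma:ieg} is defined through $p^{\rm ref}_{\rm old}(x_0|x_t)$ and the local normalizer $A_t(x_t)$. I must check that $p^*_t(x_t)\,p^*(x_0|x_t)$ actually equals $\pi^*(x_0)q(x_t|x_0)$, that is, that the marginal of $x_t$ under $\pi^*$ is consistent with the guided posterior. I would verify this via Bayes' rule under the shared forward kernel. Assume the base denoisers are exact posteriors of their own marginals, and take the geometric mixture at the clean-sequence level. Then $\pi^*(x_0)q(x_t|x_0)\propto p^{\rm ref}_{\rm old}(x_0|x_t)\,\exp(\psi A(x_0))\cdot(\text{factor depending only on }x_t)$, and normalizing over $x_0$ recovers $p^*(x_0|x_t)$ from \Cref{eq:inter_energy}.

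If this exactness fails for the practical geometric mixture of conditionals, there is a fallback. I would define $p^*_t$ as the marginal induced by $\pi^*$ and $q$, and state the result as holding under the idealization already used to derive Lemma~\ref{lemma:ieg}. Under that convention the argument goes through unchanged, and the weighting constant $1/Z$ is absorbed into the proportionality.
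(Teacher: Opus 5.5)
Your proof is correct and follows essentially the same route as the paper's: expand the KL, discard the $\theta$-independent terms, and importance-reweight from the guided distribution to $p^{\text{ref}}_{\text{old}}$ with weight $\exp(\psi A(x_0))$. Your Bayes-rule identification $p^*_t(x_t)\,p^*(x_0\mid x_t)=\pi^*(x_0)\,q(x_t\mid x_0)$ under the shared forward kernel is the step the paper compresses into ``applying importance sampling''; it is what makes the inner expectation run over $x_t\sim q(\cdot\mid x_0)$, with the constant $1/Z$ absorbed into the proportionality.
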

A primary limitation of such exponential-weighting schemes is data inefficiency \citep{park2025generalizing}: samples $x_0$ with negative advantages are assigned negligible weights, contributing to the gradient mildly. This makes the exponentially weighted method weaker than policy gradient methods. While \citet{tang2025wd1} attempts to mitigate this by introducing an auxiliary loss to penalize negative samples, such explicit penalization can trigger training instability \citep{ou2025principled}. In contrast, \algname naturally incorporates the sign of the advantage function, allowing for the stable and efficient utilization of the entire sample batch.

\textbf{Regularized Policy Gradient.} Applying reverse-KL distillation to the energy-guided teacher recovers an policy-gradient-based regularized RL objective (proof in \ref{proof:rkl_pg}):
\begin{align}
&\mathcal{L}_{\text{rev}}(\theta) \overset{\text{def}}{=} \mathbb{E}_{t \sim U(0,1), x_t \sim p_{\theta,t}} \big[D_{\text{KL}}( p_\theta(\cdot|x_t)) \parallel p^*(\cdot|x_t)\big] \nonumber 
\\ = &\underbrace{\mathbb{E}_{x_0 \sim \pi_\theta} [ - \psi 
A(x_0)]}_{\text{reward maximization}} + \underbrace{\mathbb{E}_{t \sim U(0,1), x_0 \sim \pi_\theta ,x_t \sim q(\cdot|x_0)} [A_t(x_t)]}_{\text{reward baseline: diffusion state value}} + \underbrace{ \mathbb{E}_{x_0 \sim \pi_\theta} [ L(x_0; p_\theta)- L(x_0; p_\text{old}^\text{ref}) ]}_{\text{regularization}},\nonumber 
\end{align}
where $L$ denotes ELBO. This decomposition shows that reverse-KL distillation recovers a regularized
policy gradient (PG) form. The leading term corresponds to maximizing the expected
advantage under the current policy $\pi_\theta$, while $A_t(x_t)$ acts as a
state-dependent baseline. The
remaining term regularizes the current denoiser against the old/reference
mixture $p_\text{old}^\text{ref}$. Therefore, PG-based methods can
be viewed as optimizing a related objective, but with the likelihood term approximated by ELBO, including SPG \citep{wang2025spg}, UniGRPO \citep{yang2025mmada}, ESPO \citep{ou2025principled}. However, as detailed in \Cref{sec:tim}, all of these methods that require importance sampling, such as PG for dLLMs, have inevitable TIM bias.

\textbf{Reinforcement Learning via Logit-Matching.} Logit-matching loss has been used in \textit{auto-regressive} models post-training \citep{zhu2023fine,wu2024self,tang2025rspo}, including the large-scale one in Kimi \citep{team2025kimi}.
Concurrent work EMBR \citep{shankar2026energy} applies similar logit-matching for dLLMs in preference learning, which is derived from an offline RL objective and restricted to preference data. Recently, logit-matching loss has also been shown to be exceptionally stable in continuous diffusion RL \citep{choi2026rethinking}. In contrast, \algname is a normalization-free method in general RL settings for (discrete) dLLMs. (More related work in \ref{append:related_work})

%% file: neurips2026/experiments.tex
\section{Experiment}
\label{sec:exp}

We perform RL based on two open-source diffusion LLMs, LLaDA-8B-Instruct and Dream-v0-Instruct-7B, evaluated across six benchmarks spanning three domains: mathematical reasoning (GSM8K, MATH500), planning (Countdown, Sudoku), and coding (HumanEval, MBPP). All trainings except coding tasks are based on Low-Rank Adaptation (LoRA). A separate model is trained for each task, except for coding, where a single model is trained on AceCoder-87K and evaluated on both HumanEval and MBPP. Training for planning and coding tasks follows the reward configuration (verifiable reward) following ESPO \citep{ou2025principled}, while on the mathematical reasoning we incorporate format-reward following d1 \citep{zhao2025d1}. (See more details in \ref{append:exp_setup}) 

\textbf{Evaluation.} Given the incompatible evaluation protocols across prior literature, we follow the ESPO settings and directly adopt their reported results for LLaDA, d1, wd1, and UniGRPO (detailed in Appendix~\ref{app:exp_hyper}); math results are reproduced from the official repositories and evaluated via lm-eval with diffusion steps equal to the generation length. To unify evaluation, we use zero-shot for Sudoku, Countdown, GSM8K, MATH500, and HumanEval, and 3-shot for MBPP, with logical planning tested at generation lengths of 128, 256, and 512. 

\textbf{Implementation.} We implement mainly two versions of \algname, both of which remove the intractable normalization constant (i.e. $Z_t$ in \Cref{eq:egdd_upper}). One is default \algname direct matching without token logit centralization (TLC) and the normalization constant is removed (i.e. \Cref{eq:egdd_practical} but without TLC), which we name \algname directly. We also implement \algname with TLC following the same equation. For baseline implementation and their results, we either reproduce results by evaluating the released checkpoints or re-run training using their official implementation.

\subsection{Main Results}

We evaluate \algname on the pure diffusion model Dream-7B in Table \ref{exp:dream}. Compared to prior state-of-the-art ELBO-based methods, \algname significantly improves the average test accuracy $+9.5\%$ on average and $+10\%$ with the best-performing generation length. Adding token logit centralization yields an additional gain of roughly, achieving up to $+19.6\%$ over the baselines. These results confirm the effectiveness of both \algname and the centralization. 

\begin{table}
\caption{Zero-shot test accuracy of different methods using diffusion language model Dream-7B-Instruct. We summarize the best performance across generation length in the right figure. \algname demonstrates significant improvements over ELBO-based methods, achieving up to $+19.6\%$.}
\label{exp:dream}
\vspace{.3em}
\centering
\scriptsize
\begin{tabular}{l ccc>{\columncolor{gray!20}}c ccc>{\columncolor{gray!20}}c}
\toprule
& \multicolumn{4}{c}{\textbf{Sudoku}} & \multicolumn{4}{c}{\textbf{Countdown}} \\
\cmidrule(lr){2-5} \cmidrule(lr){6-9}
\textbf{Model / Seq Len}
& \textbf{128} & \textbf{256} & \textbf{512} & \textbf{Avg.}
& \textbf{128} & \textbf{256} & \textbf{512} & \textbf{Avg.} \\
\midrule

\textbf{Dream-7B}
& 9.3 & 2.1 & 14.0 & 8.5
& 8.5 & 7.8 & 17.4 & 11.2 \\

+ diffu-GRPO (d1)
& 64.4 & 69.7 & 51.1 & 61.7
& 27.3 & 27.7 & 37.5 & 30.8 \\

+ wd1
& 29.5 & 39.2 & 30.3 & 33.0
& 28.9 & 37.9 & 42.2 & 36.3 \\

+ ESPO
& 71.7 & 72.3 & 71.3 & 71.8
& 68.8 & 66.8 & 64.8 & 66.8 \\

\midrule
+ \textbf{\algname} (ours)
& 82.3 & 82.2 & 79.3 & 81.3
& 70.3 & 68.4 & 63.7 & 67.5 \\


+ \textbf{\algname w/ TLC} (ours) &\textbf{91.1}&\textbf{91.3}&\textbf{91.7}& \textbf{91.4}&\textbf{83.2}&\textbf{82.4}&\textbf{84.8}
& \textbf{83.5} \\
\bottomrule
\end{tabular}
\begin{subfigure}{0.3\textwidth}
\includegraphics[width=\linewidth,valign=m]{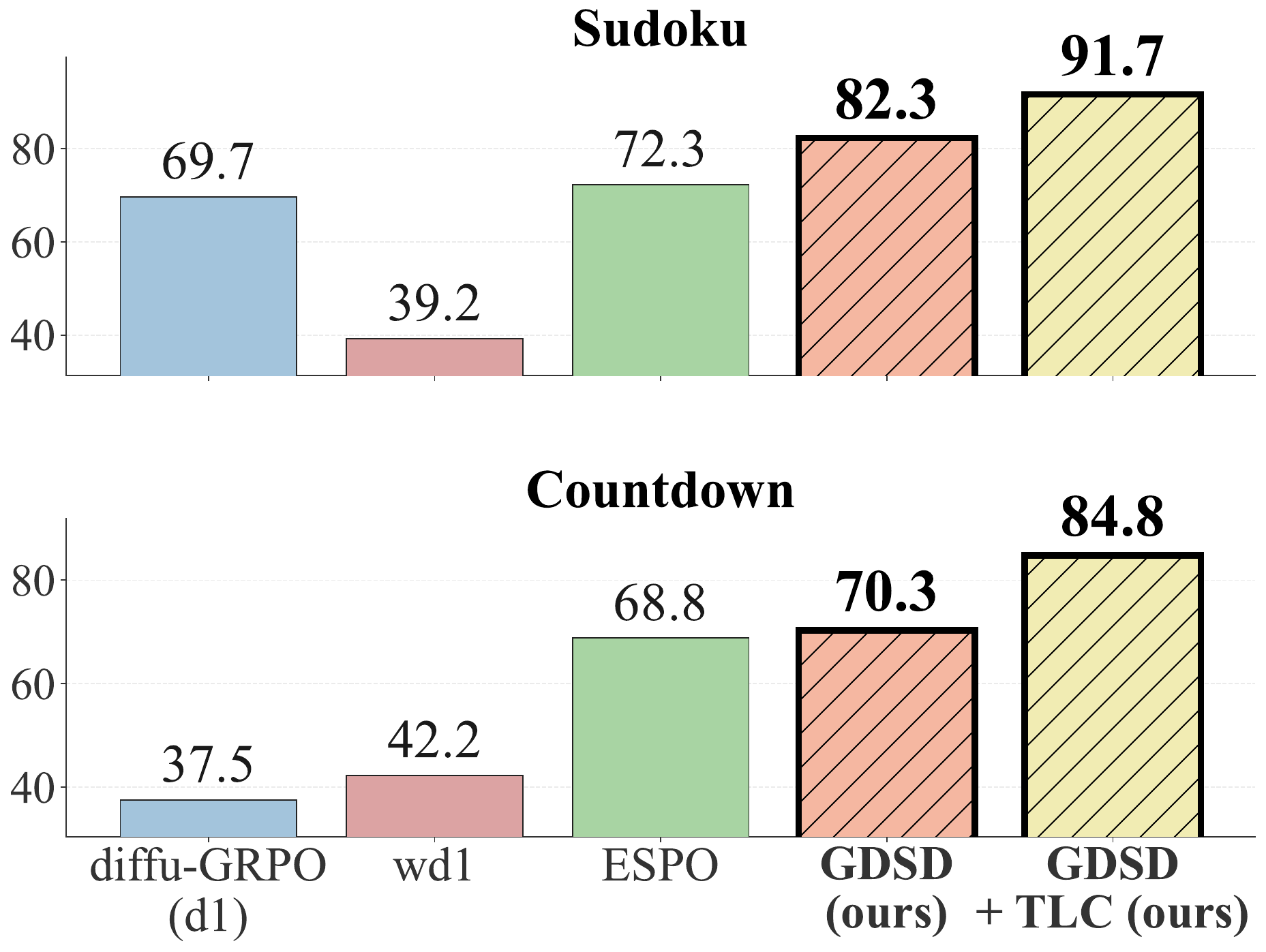}
\end{subfigure}
\vspace{-1em}
\end{table}

For the block diffusion model \citep{arriola2025block}, the most widely tested base model LLaDA-8B, we conduct a comprehensive evaluation on planning, math, and coding benchmarks. We provide the training reward dynamics in \Cref{fig:reward_curves}, and testing accuracy in  \Cref{tab:llada_all} and \Cref{tab:llada_code}. As for training reward, on the planning and coding tasks, GDSD demonstrates stable convergence relative to prior state-of-the-art ELBO-based methods. As for testing accuracy, \algname outperforms prior methods on almost all the benchmarks, achieving average accuracy gains up to $+5\%$ on Sudoku, $+1.9\%$ on Countdown, $+3\%$ on GSM8K and $+1.1\%$ on MATH, $+4.6\%$ on HumanEval-Plus.

These results, particularly the accuracy improvement of up to $+20\%$, demonstrate the effectiveness of our denoiser self-distillation method reformulated from RL. Even when GDSD performs comparably to ELBO-based baselines on some benchmarks, the results still support our central claim: ELBO is not necessary for RL with dLLMs and can be removed to avoid the bias analyzed in \Cref{sec:tim}. Overall, the empirical results validate GDSD as an effective alternative to ELBO-based importance-sampling RL, providing a cleaner and more principled optimization route.
\vspace{-.5em}

\subsection{Ablation Study}
In this section, we perform ablation study on different elements of \algname to confirm the effectiveness of our design choices.

\textbf{Token Logit Centralization (TLC).} Across all benchmarks in \Cref{tab:llada_all}, \algname by direct matching without TLC achieves clear improvements over prior ELBO-based methods. This confirms the effect of the normalization constant is negligible in practice. Notably, although \algname with TLC is more faithful to the theory, it sometimes leads to degraded test performance. However, the lack of consistent improvement in test accuracy suggests a potential generalization gap. We hypothesize that, by self-centralizing, TLC makes the update focus more strongly on relative logit differences, which improves fitting to the training reward but may also amplify overfitting to
training-specific signals, which explains the stable training dynamics of applying TLC in \Cref{fig:reward_curves} and \Cref{fig:aba_study}.

\textbf{Guidance Coefficient $\psi$.} We further conduct an ablation study on the guidance strength $\psi$ in
\Cref{fig:aba_study} to examine the role of energy guidance in GDSD. The results
show that increasing $\psi$ generally leads to higher training rewards, indicating
that stronger energy guidance produces a teacher denoiser more biased toward
high-advantage samples. This trend supports the effectiveness of the proposed
guided-denoiser distillation formulation: GDSD directly distills the
energy-guided target distribution, and therefore can translate stronger guidance
signals into improved optimization performance.

\begin{figure}[t]
    \centering
    \begin{subfigure}{0.32\textwidth}
        \centering
        \includegraphics[width=\linewidth]{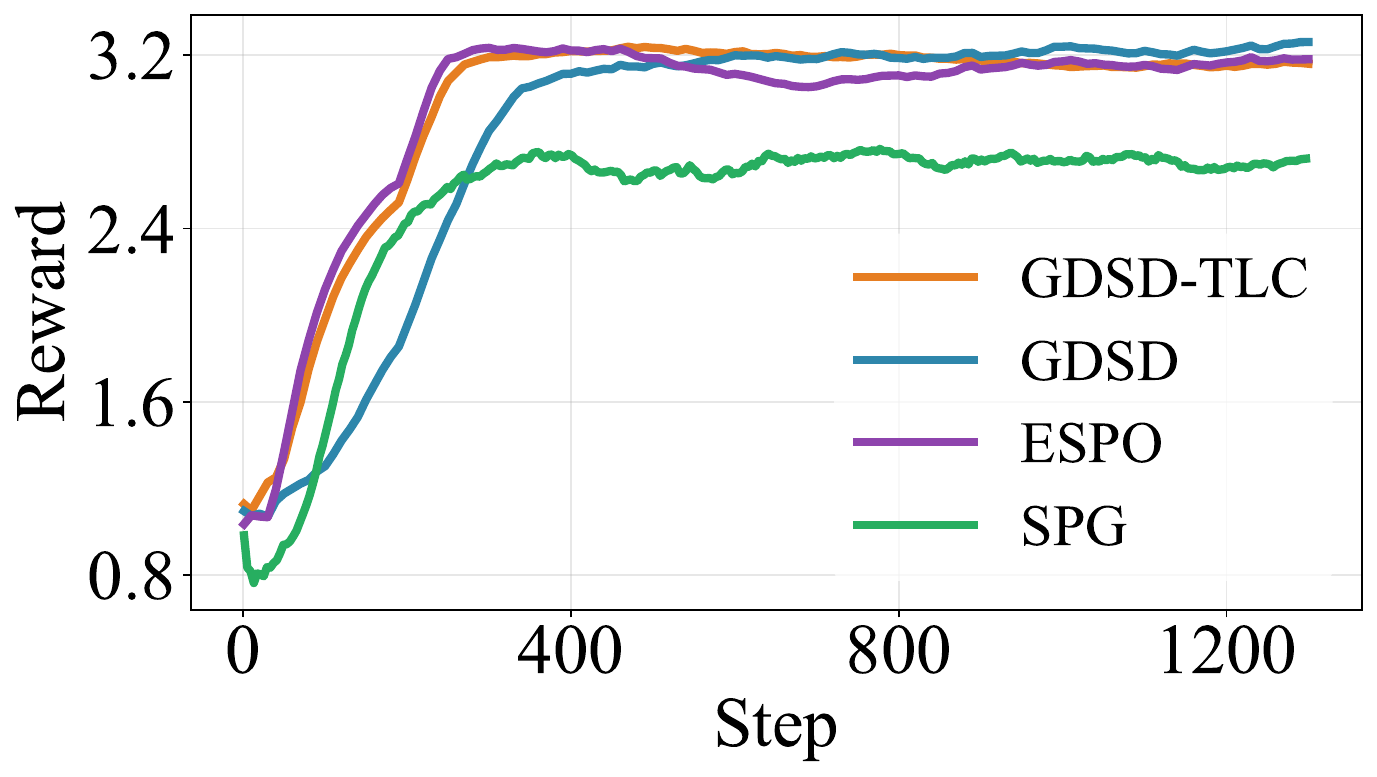}
        \label{fig:reward_gsm8k}
    \end{subfigure}
    \hfill
    \begin{subfigure}{0.32\textwidth}
        \centering
        \includegraphics[width=\linewidth]{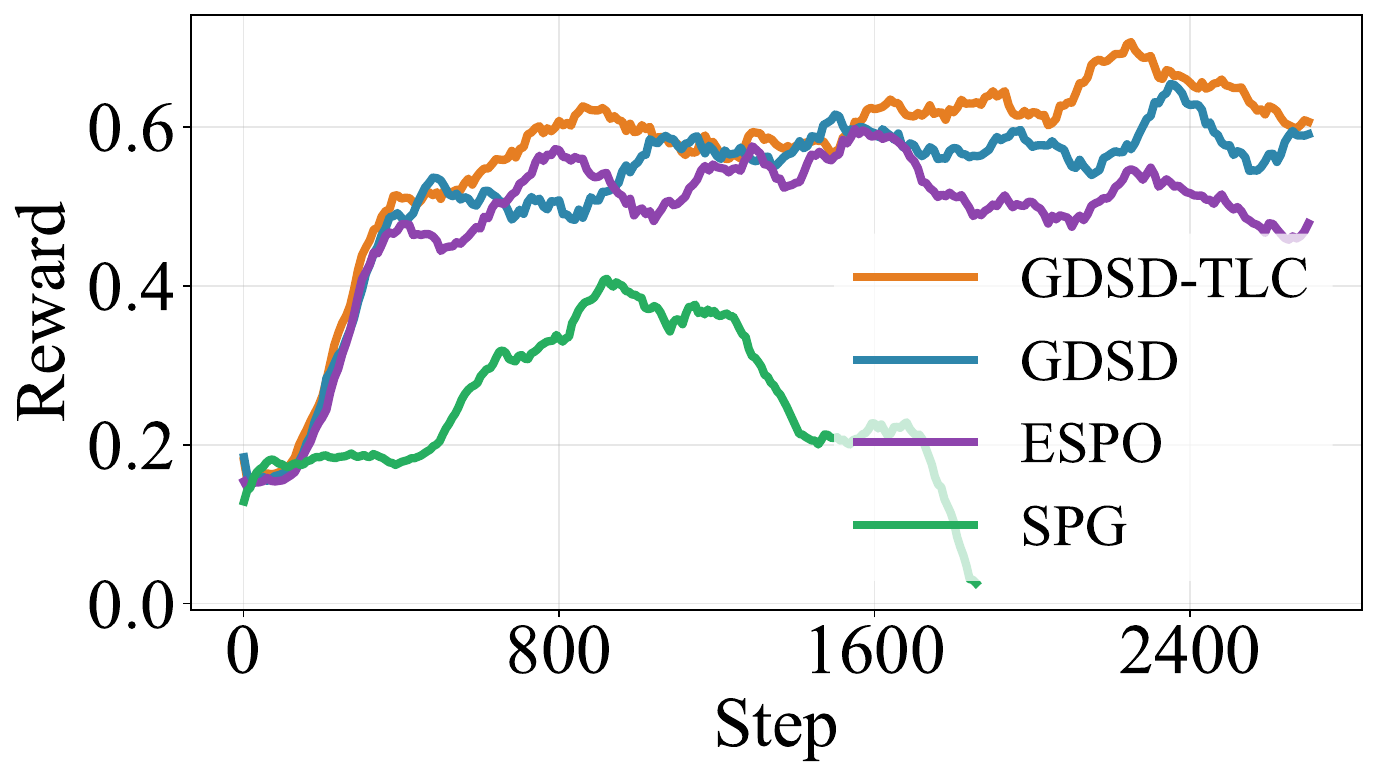}
        \label{fig:reward_countdown}
    \end{subfigure}
    \hfill
    \begin{subfigure}{0.32\textwidth}
        \centering
        \includegraphics[width=\linewidth]{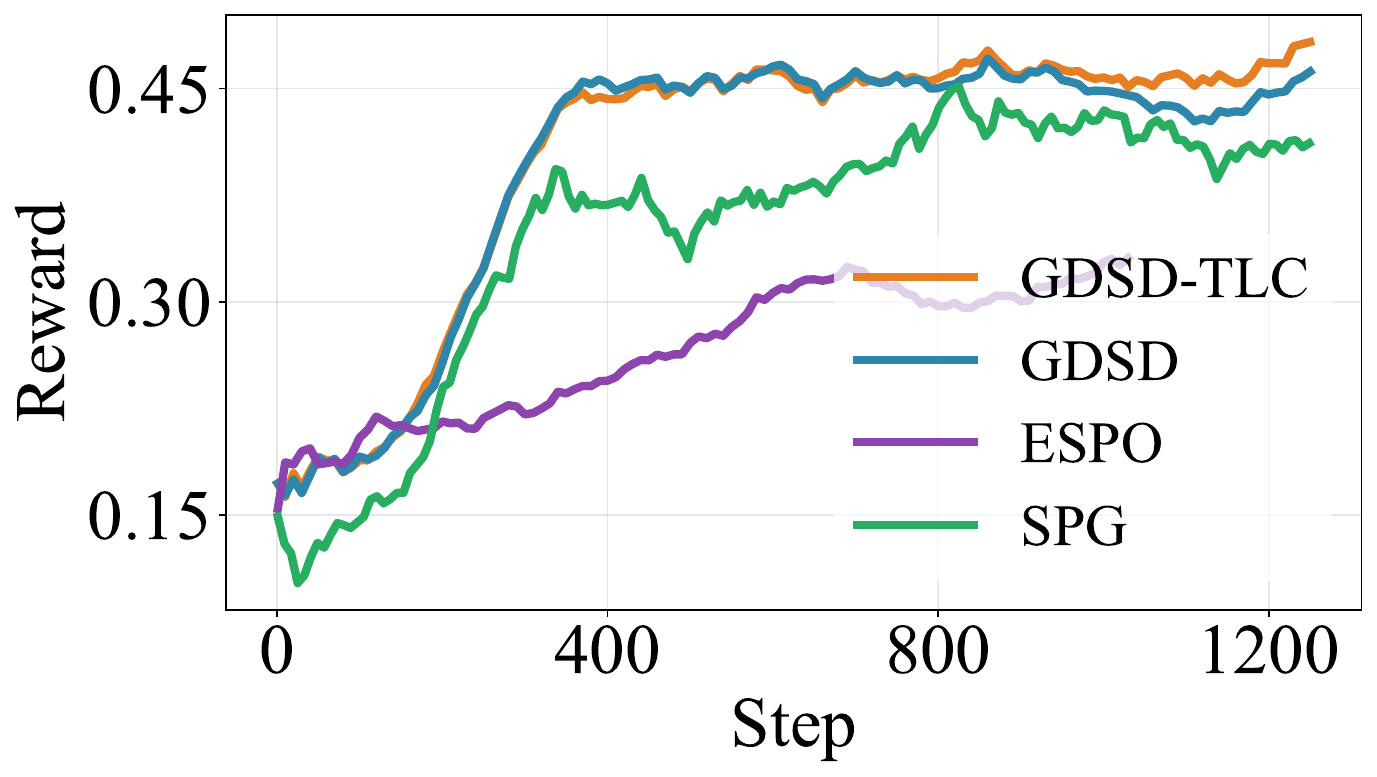}
        \label{fig:reward_code}
    \end{subfigure}
    \vspace{-1em}
    \caption{Training reward dynamics trained on LLaDA-8B-Instruct with different methods on different training datasets: \textbf{Left:} GSM8k; \textbf{Mid:} Countdown; \textbf{Right:} Coding. Our method demonstrates more stable training reward dynamics across datasets.}
    \label{fig:reward_curves}
\end{figure}

\begin{table*}[t] 
\centering 
\caption{Test accuracy of different methods using \textit{block} diffusion \citep{arriola2025block} model LLaDA-8B-Instruct. The results with $^{\dagger}$ are reproduced by us; others are extracted from the corresponding papers. Numbers following benchmark $(k)$ represent $k$-shot evaluation. Our methods demonstrate consistent improvement on average accuracy.} 
\label{tab:llada_all} 
\scriptsize
\begin{tabular}{l ccc>{\columncolor{gray!20}}c ccc>{\columncolor{gray!20}}c ccc>{\columncolor{gray!20}}c}
\toprule
& \multicolumn{4}{c}{\textbf{Sudoku(0)}}
& \multicolumn{4}{c}{\textbf{Countdown}}
& \multicolumn{4}{c}{\textbf{GSM8K}} \\
\cmidrule(lr){2-5} \cmidrule(lr){6-9} \cmidrule(lr){10-13}
\textbf{Model / Seq Len}
& \textbf{128} & \textbf{256} & \textbf{512} & \textbf{Avg.}
& \textbf{128} & \textbf{256} & \textbf{512} & \textbf{Avg.}
& \textbf{128} & \textbf{256} & \textbf{512} & \textbf{Avg.} \\
\midrule

\textbf{LLaDA-8B}
& 24.8 & 16.2 & 6.0 & 15.7
& 20.7 & 19.5 & 16.0 & 18.7
& 71.3 & 76.2 & 80.2 & 75.9 \\

+ diffu-GRPO (d1)
& 26.7 & 24.1 & 15.9 & 22.2
& 33.2 & 31.3 & 37.1 & 33.9
& 74.6 & 78.1 & 81.2 & 78.0 \\

+ wd1
& 22.6 & 76.4 & 62.8 & 53.9
& 47.7 & 51.2 & 46.1 & 48.3
& 77.2 & 80.8 & 82.3 & 80.1 \\

+ UniGRPO
& / & / & / & /
& 44.5 & 43.0 & 57.0 & 48.2
& 74.9 & 82.5 & 82.7 & 80.0 \\

+ DMPO
& 32.8 & 24.6 & 20.0 & 25.6
& 67.2 & 80.9 & \textbf{82.8} & 77.0
& 74.8 & 82.4 & 85.2 & 80.8 \\

+ SPG
& 87.3$^\dagger$ & 88.2$^\dagger$ & 71.7$^\dagger$ & 82.4
& 68.8 & 70.7 & 70.3 & 69.9
& 82.8$^\dagger$ & 85.1$^\dagger$ & 84.9$^\dagger$ & 84.2 \\

+ ESPO
& \textbf{92.7} & 84.7 & 80.5 & 86.0
& 81.6 & 82.0 & 79.3 & 81.0
& 81.6$^\dagger$ & 82.5$^\dagger$ & 83.0$^\dagger$ & 82.4 \\

\midrule

+ \textbf{\algname} (ours)
& 89.5 & 89.8 & 88.8 & 89.4
& 82.4 & \textbf{84.0} & \textbf{82.8} & \textbf{83.1}
& \textbf{83.9} & \textbf{86.4} & \textbf{86.0} & \textbf{85.4} \\

+ \textbf{\algname w/ TLC} (ours)
& 90.4 & \textbf{92.0} & \textbf{90.6} & \textbf{91.0}
& \textbf{85.6} & 80.5 & 75.0 & 80.4
& 81.3 & 85.6 & 85.0 & 84.0 \\

\bottomrule
\end{tabular}














\vspace{0.8em}

\scriptsize
\begin{tabular}{l ccc>{\columncolor{gray!20}}c ccc>{\columncolor{gray!20}}c ccc>{\columncolor{gray!20}}c}
\toprule
& \multicolumn{4}{c}{\textbf{MATH500}}
& \multicolumn{4}{c}{\textbf{HumanEval-Plus(0)}}
& \multicolumn{4}{c}{\textbf{MBPP(3)}} \\
\cmidrule(lr){2-5} \cmidrule(lr){6-9} \cmidrule(lr){10-13}
\textbf{Model / Seq Len}
& \textbf{128} & \textbf{256} & \textbf{512} & \textbf{Avg.}
& \textbf{128} & \textbf{256} & \textbf{512} & \textbf{Avg.}
& \textbf{128} & \textbf{256} & \textbf{512} & \textbf{Avg.} \\
\midrule

\textbf{LLaDA-8B}
& 34.4 & 35.2 & 41.4 & 37.0
& 23.2 & 30.5 & 41.5 & 31.7
& 36.2 & 42.0 & 38.1 & 38.8 \\

+ diffu-GRPO (d1)
& 34.9 & 36.6 & 41.7 & 37.7
& 22.0 & 29.9 & 37.2 & 29.7
& 34.8 & 36.6 & 38.0 & 36.5 \\

+ wd1
& 33.3 & 37.7 & 39.8 & 36.9
& 29.9 & 29.9 & 32.9 & 30.9
& 38.0 & 37.2 & 34.4 & 36.5 \\

+ UniGRPO
& 32.4 & 37.4 & 39.4 & 36.4
& / & / & / & /
& / & / & / & / \\

+ DMPO
& 30.0 & 38.2 & 42.8 & 37.0
& / & / & / & /
& / & / & / & / \\

+ SPG
& 33.4 & 40.0 & 41.8 & 38.4
& 32.9$^\dagger$ & 34.2$^\dagger$ & 37.8$^\dagger$ & 35.0
& 40.4$^\dagger$ & 40.8$^\dagger$ & 40.4$^\dagger$ & 40.5 \\

+ ESPO
& 36.0 & 39.0 & 43.4 & 39.5
& 24.4 & 36.6 & \textbf{42.7} & 34.6
& \textbf{43.6}$^\dagger$ & 43.2$^\dagger$ & 41.2$^\dagger$ & 42.7 \\

\midrule

+ \textbf{\algname} (ours)
& \textbf{37.0} & 40.4 & \textbf{44.6} & \textbf{40.6}
& 36.0 & 38.4 & 41.5 & 38.6
& 40.6 & 41.8 & \textbf{43.6} & 42.0 \\

+ \textbf{\algname w/ TLC} (ours)
& 34.6 & \textbf{41.6} & 44.0 & 40.1
& \textbf{38.4} & \textbf{39.6} & 39.6 & \textbf{39.2}
& 43.0 & \textbf{43.6} & 43.2 & \textbf{43.3} \\

\bottomrule
\end{tabular}
\vspace{-1em}
\end{table*}

\input{neurips2026/contents/temp_ablation}

\section{Conclusion}
In this work, we argue that existing RL methods for dLLMs that use the likelihood evidence lower bound (ELBO) as a likelihood surrogate introduce bias, which can degrade performance and potentially cause training collapse. We propose \algfullname{} (\algname), an off-policy self-distillation framework that equivalently performs reverse-KL-regularized RL without requiring likelihood ratios, thereby bypassing this bias. Empirically, \algname{} consistently improves over state-of-the-art ELBO-based RL methods on Dream-7B and LLaDA-8B across planning, math, and coding benchmarks. The gains are especially pronounced on Dream-7B, reaching up to $+20\%$ absolute improvement. These results demonstrate that \algname{} provides an effective and stable alternative for RL with dLLMs.

\input{neurips2026/contents/math_planning_results}

%% file: neurips2026/contents/temp_ablation.tex
\begin{figure}[t!]
    \centering
    \begin{subfigure}{0.32\textwidth}
        \centering
        \includegraphics[width=\linewidth,valign=t]{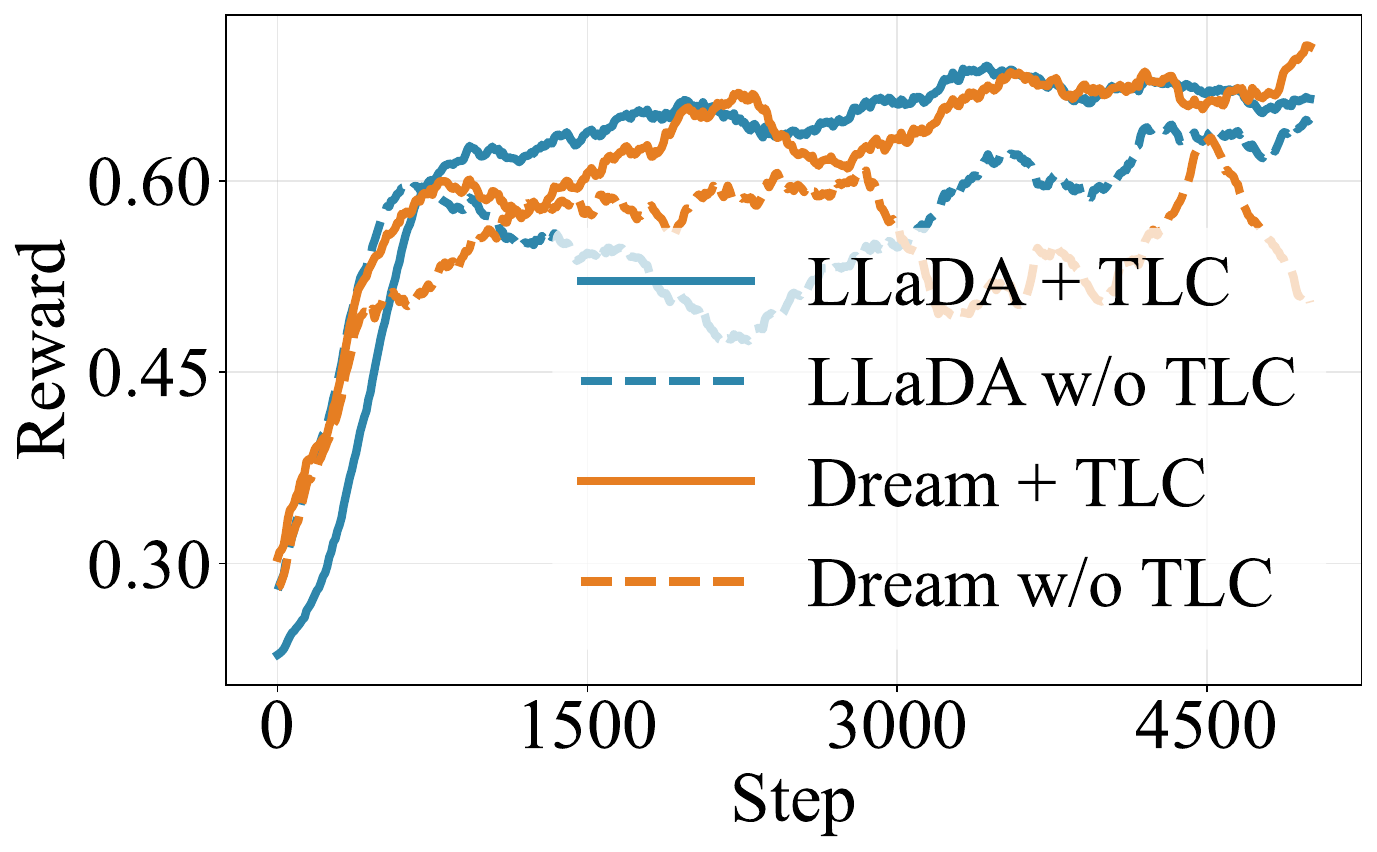}
        \label{fig:tlc_reward_ctd}
    \end{subfigure}
    \hfill
    \begin{subfigure}{0.32\textwidth}
        \centering
        \includegraphics[width=\linewidth]{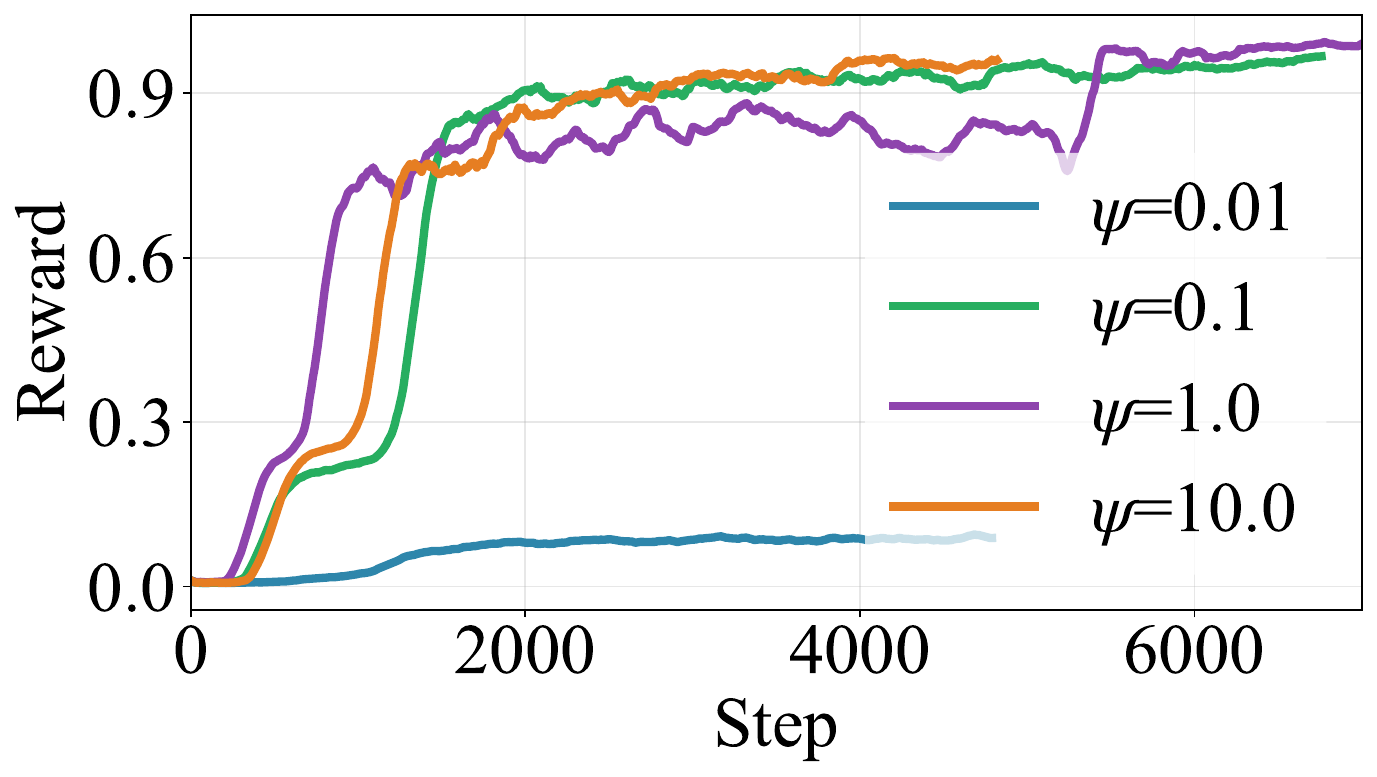}
    \end{subfigure}
    \hfill
    \begin{subfigure}{0.32\textwidth}
        \centering
        \includegraphics[width=\linewidth]{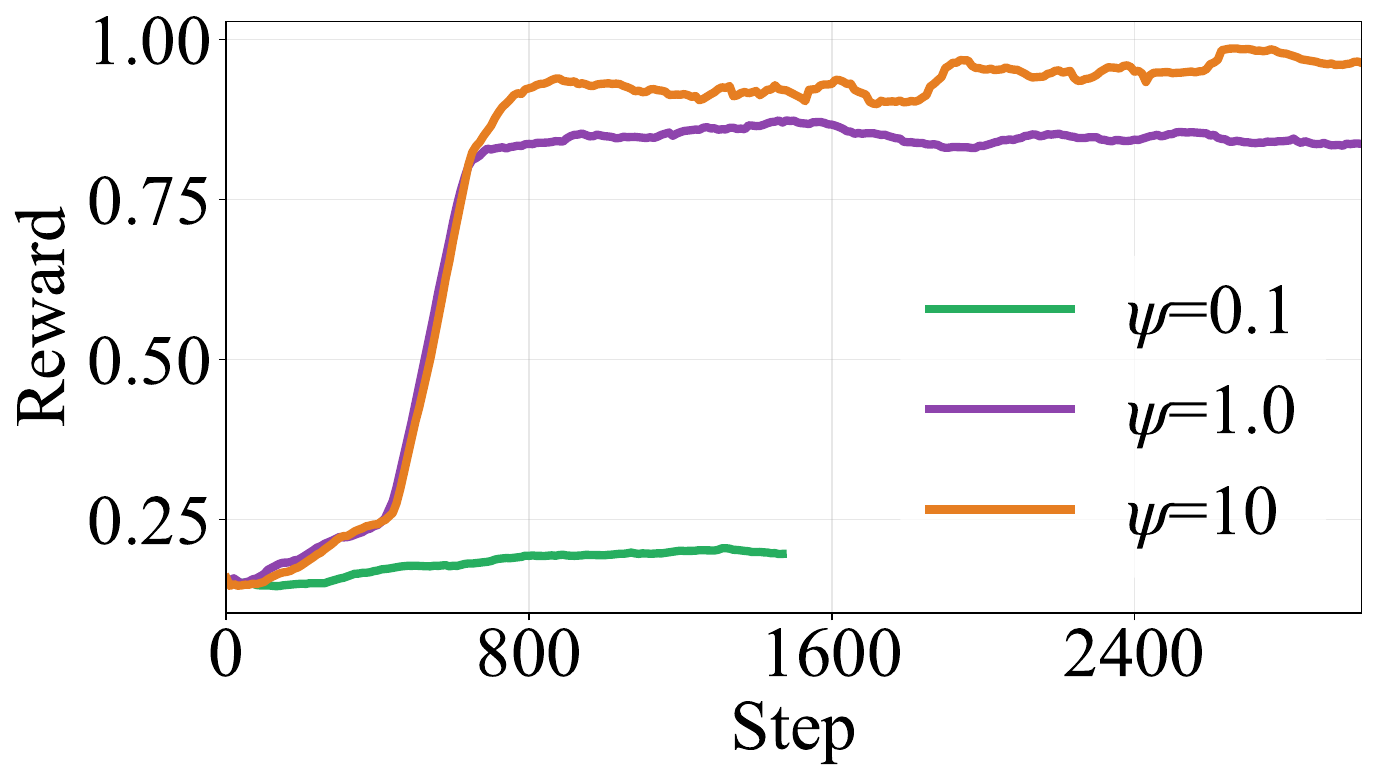}
    \end{subfigure}  
    \caption{Ablation study on Token-level Logit Centralization (TLC) and the energy-guidance coefficient $\psi$. \textbf{Left}:Training dynamics on Countdown with different models with and without TLC; \textbf{Mid:} Training dynamics on Sudoku with Dream-7B.; \textbf{Right:} Training dynamics on Sudoku with LLaDA-8B. TLC demonstrate a more stable training behavior. The improved training reward by increasing the temperature $\psi$ implies the effectiveness of GDSD.}
    \label{fig:aba_study}
    \vspace{-1.5em}
\end{figure}

%% file: neurips2026/appendix.tex
\appendix


\input{neurips2026/appendix/other_checkpoints}

%% file: neurips2026/appendix/other_checkpoints.tex
\newpage
\section{Proof}

\subsection{Lemma: Token-level Logit Centralization}
\label{sec:proof_lc}
\begin{lemma}
Denote the n-th token of clean completion $x_0$ as $x^{(n)}_0$, for any MDM $p$, we have the token-level logit centralization defined in \Cref{eq:def_lc} equivalent to sequence-level centralized logit $\log p(x_0|x_t) - \sum_{x_0} \log p(x_0|x_t)$ :
\begin{align}
\log p(x_0|x_t) - \sum_{x_0} \log p(x_0|x_t) = \sum_n^{N} \log p(x^{(n)}_0|x_t) - \sum_n^{N} \frac{1}{|\mathcal{V}|} \sum_{x'^{(n)}_0 \in \mathcal{V}} \log p(x'^{(n)}_{0}|x_t)
\label{eq:practical_lc}
\end{align}
\end{lemma}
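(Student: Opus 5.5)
We interpret $\sum_{x_0}$ on the left-hand side as the uniform average $\frac{1}{|\mathcal{X}|}\sum_{x_0\in\mathcal{X}}$ over the completion space $\mathcal{X}=\mathcal{V}^N$, so $|\mathcal{X}|=|\mathcal{V}|^N$. This matches the definition of the baseline $b$ in \Cref{proposition:centering}. The plain sum would instead carry an extra factor $|\mathcal{V}|^{N}$, so the identity only holds in the averaged reading. The proof is then a direct computation with two ingredients: the token-level factorization of the denoiser, and an exchange of the order of summation.

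\textbf{Step 1: the first terms agree.} We use the factorization stated after \Cref{eq:egpo}, namely $\log p(x_0|x_t)=\sum_{n=1}^N \log p(x_0^{(n)}|x_t)$. This is valid because each token-level factor conditions only on $x_t$, not on the other clean tokens. It immediately identifies $\log p(x_0|x_t)$ on the left with $\sum_n \log p(x_0^{(n)}|x_t)$ on the right.

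\textbf{Step 2: the averaged terms agree.} Substituting the factorization into the average gives
\begin{align}
\frac{1}{|\mathcal{V}|^N}\sum_{x_0\in\mathcal{V}^N}\sum_{n=1}^N \log p(x_0^{(n)}|x_t)
=\sum_{n=1}^N \frac{1}{|\mathcal{V}|^N}\sum_{x_0\in\mathcal{V}^N}\log p(x_0^{(n)}|x_t). \nonumber
\end{align}
For a fixed position $n$, the summand depends only on the single coordinate $x_0^{(n)}$. We therefore split the sum over $\mathcal{V}^N$ into a sum over $x_0^{(n)}\in\mathcal{V}$ and a sum over the remaining $N-1$ coordinates. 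The latter contributes only the multiplicity $|\mathcal{V}|^{N-1}$. Hence the inner average equals $\frac{1}{|\mathcal{V}|}\sum_{x'^{(n)}_0\in\mathcal{V}}\log p(x'^{(n)}_0|x_t)$. Summing over $n$ gives exactly the second term on the right-hand side of \Cref{eq:practical_lc}.

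\textbf{Main obstacle.} The only delicate point is the normalization convention in the left-hand side of \Cref{eq:practical_lc}. The rest is a mechanical swap of finite sums. We will state the averaged reading explicitly at the start of the proof. We will also note that the token-level marginals $p(x^{(n)}_0|x_t)$ are the model's per-position outputs, so no additional independence assumption is needed beyond the factorization already adopted in \Cref{sec:gdsd}.
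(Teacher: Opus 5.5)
Your proposal is correct and follows the paper's proof essentially step for step. Both substitute the token factorization, exchange the finite sums, and count that each $v\in\mathcal{V}$ occupies position $n$ in exactly $|\mathcal{V}|^{N-1}$ sequences, so that $|\mathcal{V}|^{N-1}/|\mathcal{V}|^{N}=1/|\mathcal{V}|$. You are also right that $\sum_{x_0}$ must be read as the average $\frac{1}{|\mathcal{X}|}\sum_{x_0\in\mathcal{X}}$: the paper's proof uses exactly this reading, writing the left-hand side with the $1/|\mathcal{X}|$ factor, but never flags the mismatch with the statement as written.
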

\label{lemma:tlc}
\begin{proof}
The Left-Hand Side (LHS) of Equation \Cref{eq:practical_lc} represents the centralized log-probability of the sequence $x_0$:$$\text{LHS} = \log p(x_0|x_t) - \frac{1}{|\mathcal{X}|} \sum_{x_0 \in \mathcal{X}} \log p(x_0|x_t)$$Using the factorization property $\log p(x_0|x_t) = \sum_{n} \log p(x_0^{(n)}|x_t)$, we substitute this into both terms:$$\text{LHS} = \sum_{n=1}^N \log p(x_0^{(n)}|x_t) - \frac{1}{|\mathcal{X}|} \sum_{x_0 \in \mathcal{X}} \left( \sum_{n=1}^N \log p(x_0^{(n)}|x_t) \right).$$

The space of all possible sequences $\mathcal{X}$ is the Cartesian product of the vocabulary $\mathcal{V}$ for each position $n$: $\mathcal{X} = \mathcal{V} \times \mathcal{V} \times \dots \times \mathcal{V}$. The total number of sequences is $|\mathcal{X}| = |\mathcal{V}|^N$. We can rewrite the sum over $\mathcal{X}$ as a nested sum over each token position:$$\sum_{x_0 \in \mathcal{X}} \sum_{n=1}^N \log p(x_0^{(n)}|x_t) = \sum_{n=1}^N \left( \sum_{x_0 \in \mathcal{X}} \log p(x_0^{(n)}|x_t) \right).$$

Then for a fixed position $n$, the term $\log p(x_0^{(n)}|x_t)$ only depends on the token at that specific position. In the sum over all $|\mathcal{V}|^N$ possible sequences, each specific token $v \in \mathcal{V}$ appears at position $n$ exactly $|\mathcal{V}|^{N-1}$ times. Therefore:$$\sum_{x_0 \in \mathcal{X}} \log p(x_0^{(n)}|x_t) = |\mathcal{V}|^{N-1} \sum_{v \in \mathcal{V}} \log p(v|x_t).$$

Substitute this back into the LHS equation:$$\text{LHS} = \sum_{n=1}^N \log p(x_0^{(n)}|x_t) - \frac{1}{|\mathcal{V}|^N} \sum_{n=1}^N \left( |\mathcal{V}|^{N-1} \sum_{v \in \mathcal{V}} \log p(v|x_t) \right)$$Simplify the fraction $\frac{|\mathcal{V}|^{N-1}}{|\mathcal{V}|^N} = \frac{1}{|\mathcal{V}|}$:$$\text{LHS} = \sum_{n=1}^N \log p(x_0^{(n)}|x_t) - \sum_{n=1}^N \frac{1}{|\mathcal{V}|} \sum_{v \in \mathcal{V}} \log p(v|x_t).$$

By swapping the index $v$ for $x_0'^{(n)}$ to match the notation in the image:$$\text{LHS} = \sum_{n=1}^N \log p(x_0^{(n)}|x_t) - \sum_{n=1}^N \frac{1}{|\mathcal{V}|} \sum_{x_0'^{(n)} \in \mathcal{V}} \log p(x_0'^{(n)}|x_t)$$This matches the RHS of Equation \Cref{eq:practical_lc} exactly. 

By the factorization of the sequence-level denoising distribution, the target model satisfies
\(
p^*(x_0 | x_t) = \prod_{n=1}^N p^*(x_0^{(n)} | x_t),
\)
where
\(
p^*(x_0^{(n)} | x_t) = \mathrm{SoftMax} \big(\log \bar{p}^*(x_0^{(n)} | x_t)\big).
\)
It follows that \algname with token-level logit centralization has the same optimal probability distribution as the original \algname formulation.
\end{proof}

\subsection{Proof to Proposition \ref{proposition:centering}}
\label{sec:proof_centering}
\begin{proof}
According to Lemma \ref{lemma:tlc}, token-level logit centralization (TLC) is equivalent to sequence-level centralization. Therefore, the centralized logit of the teacher denoiser becomes 
\begin{align}
& \log \bar p^*(x_0|x_t) \\
 = &\log p^*(x_0|x_t) - \frac{1}{|\mathcal{X}|} \sum_{x_0} \log p^*(x_0|x_t)  \\
=  &\beta \log p_\text{old}(x_0|x_t) + (1-\beta) \log p_\text{ref}(x_0|x_t) + \cancel{\log \sum_{x_0} p_{\text{old}}(x_0|x_t)^{1-\beta} p_{\text{ref}}(x_0|x_t)^{\beta}} \nonumber  \\
&+ \psi A(x_0) -  \textcolor{red}{\cancel{A_t(x_t)}} \\
- &\Big(\frac{1}{|\mathcal{X}|} \sum_{x_0} \beta \log p_\text{old}(x_0|x_t) + (1-\beta) \log p_\text{ref}(x_0|x_t) +\cancel{\log \sum_{x_0} p_{\text{old}}(x_0|x_t)^{1-\beta} p_{\text{ref}}(x_0|x_t)^{\beta}} \nonumber  \\
& + \psi {A}(x_0) - \textcolor{red}{\cancel{A_t(x_t)}}\Big) \\
= & \beta \log p_\text{old}(x_0|x_t) + (1-\beta) \log p_\text{ref}(x_0|x_t) - \frac{1}{|\mathcal{X}|} \sum_{x_0} \big( \beta \log p_\text{old}(x_0|x_t) + (1-\beta) \log p_\text{ref}(x_0|x_t)\big) \nonumber \\
 &+ \psi A(x_0) - \sum_{x_0} A(x_0) \\
 = &  \beta \log \bar p_\text{old}(x_0|x_t) + (1-\beta) \log \bar  p_\text{ref}(x_0|x_t) + \psi A(x_0) - \sum_{x_0} A(x_0).
\end{align}
The first equality follows from the definition of centralized logits, which subtracts
the average logit over $\mathcal X$. We then substitute the log-form of
$p^*(x_0 \mid x_t)$ into both the original logit and its average. The geometric-mixture normalizer and $A_t(x_t)$ depend only on $x_t$, so they are
constant with respect to $x_0$ and cancel after centralization. The remaining
old-policy and reference-policy terms can then be regrouped into
$\log \bar p_\text{old}(x_0 \mid x_t)$ and
$\log \bar p_\text{ref}(x_0 \mid x_t)$, while the advantage term is centralized
by subtracting its average over $\mathcal X$.
\end{proof}


\subsection{Proof of Proposition \ref{prop:forward_kl_aw_elbo}}
\label{proof:forward_kl_aw_elbo}
\begin{proof}
Starting with the definition of the forward \text{KL} objective for a diffusion large language model (dLLM):
$$\mathcal{L}_{\text{fwd}}(\theta)=\mathbb{E}_{t\sim U(0,1),x_{t}\sim p_{t}^{*}}[D_{\text{KL}}(p^{*}(\cdot|x_{t})||p_{\theta}(\cdot|x_{t}))].$$ Expanding the Kullback–Leibler divergence term into its expectation form:
$$\mathcal{L}_{\text{fwd}}(\theta)=\mathbb{E}_{t\sim U(0,1),x_{t}\sim p_{t}^{*}}\mathbb{E}_{x_{0}\sim p^{*}(\cdot|x_{t})}[\log p^{*}(x_{0}|x_{t}) - \log p_{\theta}(x_{0}|x_{t})].$$ By substituting the definition of the energy-guided target denoising distribution $$p^*(x_0|x_t) = p_\text{old}^\text{ref}(x_0|x_t) \cdot \exp(\psi A(x_0) - A_t(x_t)),$$ the objective becomes:
$$\mathbb{E}_{t,x_{t}}\mathbb{E}_{x_{0}\sim p^{*}}[\log p_{\text{old}}^{\text{ref}}(x_{0}|x_{t}) + \psi A(x_{0}) - A_{t}(x_{t}) - \log p_{\theta}(x_{0}|x_{t})].$$ To find the gradient with respect to the model parameters $\theta$, we identify the terms independent of $\theta$:
\begin{itemize}
\item $\log p_{\text{old}}^{\text{ref}}(x_{0}|x_{t})$ depends only on the old and reference policies.
\item $\psi A(x_{0})$ is the advantage function based on the reward.
\item $A_{t}(x_{t})$ is the log-normalization constant (partition function).
\end{itemize}Since these terms have zero gradients w.r.t. $\theta$, the gradient of the objective simplifies to:
$$\nabla_{\theta}\mathcal{L}_{\text{fwd}}(\theta) = \nabla_{\theta}\mathbb{E}_{t\sim U(0,1),x_{t}\sim p_{t}^{*}}\mathbb{E}_{x_{0}\sim p^{*}(\cdot|x_{t})}[-\log p_{\theta}(x_{0}|x_{t})]$$Finally, by applying importance sampling to change the expectation from the target distribution $p^*$ to the reference distribution $\pi_{\text{old}}^{\text{ref}}$, we incorporate the advantage as a weight:
$$\nabla_{\theta}\mathcal{L}_{\text{fwd}}(\theta) \propto \nabla_{\theta}\mathbb{E}_{x_{0}\sim\pi_{\text{old}}^{\text{ref}}}[\exp(A(x_{0}))\mathbb{E}_{t\sim U(0,1),x_{t}\sim p_{t}^{*}}[-\log p_{\theta}(x_{0}|x_{t})]].$$ This result is precisely the objective for wd1 with ELBO as likelihood approximation and positive weight, namely Advantage-Weighted Discrete Cross-Entropy (AW-DCE) \citep{tang2025wd1} and Distribution Matching Policy Optimization (DMPO) \citep{zhu2025enhancing}.
\end{proof}

\subsection{Proof of Reverse-KL Distillation to Policy Gradient}
\label{proof:rkl_pg}
\begin{proof}
By the definition of the energy-guided teacher,
\begin{align}
\log p^*(x_0\mid x_t)
=
\log p_\text{old}^\text{ref}(x_0\mid x_t)
+
\psi A(x_0)
-
A_t(x_t).
\end{align}
Therefore,
\begin{align}
\mathcal{L}_{\text{rev}}(\theta)
&=
\mathbb{E}_{t,x_t\sim p_{\theta,t}}
\left[
D_{\mathrm{KL}}\big(p_\theta(\cdot\mid x_t)\,\|\,p^*(\cdot\mid x_t)\big)
\right] \\
&=
\mathbb{E}_{t,x_t\sim p_{\theta,t},\,x_0\sim p_\theta(\cdot\mid x_t)}
\left[
\log p_\theta(x_0\mid x_t)-\log p^*(x_0\mid x_t)
\right] \\
&=
\mathbb{E}_{t,x_t,x_0}
\left[
-\psi A(x_0)
+
A_t(x_t)
+
\log p_\theta(x_0\mid x_t)
-
\log p_\text{old}^\text{ref}(x_0\mid x_t)
\right] \\
&=
\mathbb{E}_{x_0\sim \pi_\theta}
\left[-\psi A(x_0)\right]
+
\mathbb{E}_{t\sim U(0,1),\,x_0\sim\pi_\theta,\,x_t\sim q(\cdot\mid x_0)}
\left[A_t(x_t)\right] \\
&\quad+
\mathbb{E}_{x_0\sim\pi_\theta}
\left[
L(x_0;p_\theta)-L(x_0;p_\text{old}^\text{ref})
\right],
\end{align}
where the last equality uses the sampling equivalence
$x_0\sim\pi_\theta,\ x_t\sim q(\cdot\mid x_0)$ and the definition
\[
L(x_0;p)
:=
\mathbb{E}_{t\sim U(0,1),\,x_t\sim q(\cdot\mid x_0)}
\left[-\log p(x_0\mid x_t)\right].
\]
\end{proof}

\section{Additional Related Work}
\label{append:related_work}

\textbf{Reinforcement Learning for Diffusion Models.} Reinforcement learning for diffusion models has been widely studied in the context of diffusion policies for robotics, with applications including planning, offline RL, imitation learning, and safe control~\citep{wang2022diffusion,janner2022planning,hansen2023idql,lu2023contrastive,chen2023score,ren2024diffusion,zhang2025energy,chi2025diffusion,lu2025makes,mcallister2025flow,cheng2025safe}. However, these works are primarily built upon continuous diffusion models and are typically evaluated on relatively small-scale policies. More recently, several works have explored RL fine-tuning for large-scale vision diffusion models~\citep{liu2025flow,zheng2025diffusionnft,choi2026rethinking}. These methods operate under continuous diffusion or flow-matching formulations, where the score function or velocity vector field is explicitly modeled for image generation. In contrast, discrete diffusion language models define denoising distributions or concrete score \citep{meng2022concrete} over tokens and do not expose analogous continuous score or velocity fields. This structural difference makes existing RL methods for continuous diffusion models difficult to apply directly to dLLMs, motivating RL objectives designed specifically for discrete denoising distributions.

\textbf{Reinforcement Learning via Logit Matching.}
Various post-training methods for autoregressive models can be interpreted through the lens of logit matching. For example, Kimi performs RL by matching the tractable policy to an exponential-reward-weighted old policy~\citep{team2025kimi}. Self-play policy optimization methods adopt a similar form in preference optimization~\citep{wu2024self,tang2025rspo}. These methods can be viewed as exponential-weight updates over the policy~\citep{freund1997decision}, where the weighting signal is the reward or advantage of a given response. They are also closely related to soft policy iteration methods, such as SAC~\citep{haarnoja2018soft}. However, these approaches assume access to a tractable response likelihood, which is unavailable in diffusion language models. More recently, diffusion RL methods have adopted logit-matching-style objectives. For continuous diffusion models, PAR~\citep{choi2026rethinking} shows that logit matching is stable and effective for RL post-training. EMBR~\citep{shankar2026energy} proposes a contrastive energy-matching objective for pairwise preference learning with diffusion language models, but it is derived from an offline DPO-style objective~\citep{rafailov2023direct} rather than online RL. In contrast, we derive GDSD from online objective PPO with reverse-KL regularization, analyze challenges of online RL, including TIM and normalization constant, and address them accordingly.

\textbf{Exact-Likelihood Diffusion RL.}
A few RL methods for dLLMs are based on trajectory likelihood. Unlike ELBO-based objectives, trajectory likelihood follows the actual decoding process and therefore provides an exact likelihood under the sampling distribution. A key ingredient in computing this likelihood is the token generation order. LLaDOU introduces an additional head to predict which tokens should be unmasked and trains this mechanism end-to-end with RL~\citep{huang2025reinforcing}. Similar ideas have also been explored for accelerating generation~\citep{chen2025dultra}. It is also possible to use trajectory likelihood without modifying the model architecture. For example, TraceRL applies reverse-process transition probabilities for policy optimization~\citep{wang2025revolutionizing}. However, trajectory-likelihood methods are computationally expensive, since the RL objective must be accumulated over diffusion steps, and current dLLMs predict the full sequence at each step. To reduce this cost, TraceRL and d2~\citep{wang2025d2} introduce step-merging strategies that trade off likelihood accuracy for efficiency. Therefore, although trajectory-likelihood methods avoid the bias of ELBO-based likelihood surrogates, they often require architectural changes or incur high computational cost as the number of diffusion steps increases. Moreover, their objective differs from the ELBO used in pre-training. In contrast, we aim to develop an RL method that retains the efficiency and pre-training compatibility of random-masking ELBO objectives, while avoiding the bias introduced by using ELBO as a likelihood surrogate.

\textbf{Likelihood-free Diffusion RL.} DiffusionNFT~\citep{zheng2025diffusionnft} proposes likelihood-free RL for continuous diffusion post-training by reformulating RL as reinforcement guidance and optimizing a flow-matching objective with positive and negative generations. While this avoids trajectory-likelihood estimation, it relies on the continuous diffusion/flow formulation and its velocity-field training target. In contrast, dLLMs define discrete token-denoising distributions, where such velocity fields are unavailable. LFPO~\citep{wei2026lfpo} extends the idea of DiffusionNFT to RL post-training for dLLMs. However, LFPO is essentially a reward-weighted cross-entropy/ELBO method~\citep{wei2026lfpo}, closely related to the forward-KL distillation instance discussed in \Cref{prop:forward_kl_aw_elbo}. To mitigate the data inefficiency caused by down-weighting negative samples, LFPO follows DiffusionNFT by constructing a negative dataset for explicit negative-response penalization. In contrast, GDSD performs logit matching on reward-guided denoisers, allowing both positive and negative advantage signals to be incorporated directly without relying on ELBO weighting or auxiliary negative-sample penalties.

\section{Additional Details}

\subsection{Algorithm}
\Cref{alg:gdsd} summarizes Guided Denoiser Self-Distillation (GDSD). At each training iteration, GDSD first samples a batch of prompts and uses the old policy to generate multiple completions through iterative re-masking decoding. These completions are scored by the reward function, and their advantages are computed within each prompt group by subtracting the group mean reward.GDSD then converts these reward signals into denoising-level supervision. For each sampled completion, it randomly samples diffusion times and applies the same forward masking process to construct masked states. The current, old, and reference models are evaluated on these masked inputs to obtain their denoising logits. Token-level logit centralization is applied to remove irrelevant logit offsets and eliminate the need to estimate the intractable normalization constant of the energy-guided teacher. Finally, GDSD constructs a guided teacher denoiser by combining the centralized old and reference logits with the advantage signal, and updates the current model by matching its centralized logits to this teacher. The old policy is periodically refreshed from the current model. Overall, GDSD turns RL into a denoiser self-distillation: rewards guide the teacher construction, while training remains a stable logit-matching problem instead of relying on ELBO-based likelihood estimation.

\begin{algorithm}[t]
\caption{Guided Denoiser Self-Distillation (GDSD)}
\label{alg:gdsd}
\begin{algorithmic}[1]
\REQUIRE Initial dLLM $\pi_\theta$, with reference policy $\pi_{\text{ref}}$, prompt dataset $\mathcal{D}$, reward function $r$, hyperparameters $\beta, \psi$, time-step samples $K$
\STATE Initialize $\pi_{\text{old}} \leftarrow \pi_\theta$
\FOR{each training iteration}
    \STATE Sample a batch of prompts $\{c^{(i)}\}_{i=1}^B$ from $\mathcal{D}$
    \STATE \textcolor{blue!40}{\textit{// Sampling stage}}
    \FOR{each prompt $c^{(i)}$}
        \STATE Generate $G$ completions $\{x_0^{(i,g)}\}_{g=1}^G \sim \pi_{\text{old}}(\cdot | c^{(i)})$ via iterative re-masking
        \STATE Compute rewards $r(x_0^{(i,g)})$ and advantages $A(x_0^{(i,g)}) = r(x_0^{(i,g)}) - \mathrm{mean}(r(\cdot))$
    \ENDFOR
    \STATE \textcolor{blue!40}{\textit{// Training stage}}
    \FOR{each completion $x_0$ with advantage $A(x_0)$}
        \STATE Sample $K$ time steps $\{t_k\}_{k=1}^K \sim U(0,1)$
        \FOR{$k = 1, \ldots, K$}
            \STATE Apply consistent forward process: $x_{t_k} \sim q(\cdot | x_0)$
        \ENDFOR
        \STATE Compute denoising log-probabilities $\{\log p(x_0 | x_{t_k})\}_{k=1}^K$, for all models $p=p_\theta, p_\text{old}, p_\text{ref}$.
        \STATE Apply token-level logit centralization (\Cref{eq:def_lc}) on denoising log-probabilities.
        \STATE Construct teacher logits: $\log \bar{p}^*(x_0 | x_{t_k}) = (1-\beta) \log \bar{p}_{\text{old}}(x_0 | x_{t_k}) + \beta \log \bar{p}_{\text{ref}}(x_0 | x_{t_k}) + \psi A(x_0)$
        \STATE Compute GDSD loss: $\mathcal{L}_{\text{GDSD}} = \frac{1}{K}\sum_{k=1}^K \big[\log \bar{p}_\theta(x_0 | x_{t_k}) - \log \bar{p}^*(x_0 | x_{t_k})\big]^2$
    \ENDFOR
    \STATE Update $\theta$ via gradient descent on $\mathcal{L}_{\text{GDSD}}$
    \STATE Update old policy: $\pi_{\text{old}} \leftarrow \pi_\theta$ every $\mu$ iterations
\ENDFOR
\end{algorithmic}
\end{algorithm}


\subsection{Additional Practical Design}
\label{sec:practical_design}

In this section, we present the final practical objective for \algfullname (\algname), incorporating key design choices that optimize computational efficiency. We introduce the practical objective and provide detailed explanations as follows:
\begin{myframe}[title=\algfullname (\algname) (practical)]
    \begin{align}
\mathbb{E}_{{t \sim U(0,1),\ x_0 \sim \pi^{\text{rm}}_{\text{old}},\ x_t \sim q(\cdot|x_0)}}\Bigg[ \Big( \log \frac{\bar{p}_\theta(x_0|x_t)}{\bar{p}_{\text{old}}(x_0|x_t)} - \psi {A}(x_0) \Big)^2 +  \beta \Big( \log \frac{\bar{p}_\theta(x_0|x_t)}{\bar{p}_\text{ref}(x_0|x_t)}\Big)^2 \Bigg].
\label{eq:egdd_practical}
\end{align}
\end{myframe}

\textbf{External Regularization.} To facilitate hyperparameter tuning, the reference model's log-probability can be decoupled from the primary Mean Squared Error (MSE) term. By treating it as an external regularization term $R$—similar to standard RLHF pipelines—we can compute the gradient as:
\begin{align}
\nabla_\theta \bar{\mathcal{L}}_\text{\algname} \propto \nabla_\theta  \mathbb{E}_{t \sim U(0,1),\ x_0 \sim \pi^{\text{rm}}_{\text{old}},\ x_t \sim q(\cdot|x_0)}\Big[ \big(  \log \frac{\bar{p}_\theta(x_0|x_t)}{\bar{p}_\text{old}(x_0|x_t)} - \psi A(x_0) \big)^2  + \colorbox{cyan!40}{$R$} \Big],
\end{align}
Where $R=\frac{\beta}{1-\beta}(\log\bar{p}_{\theta}(x_{0}|x_{t})-\log\bar{p}_{\text{ref}}(x_{0}|x_{t}))^{2}$. This formulation establishes a k2 KL divergence approximation \citep{schulman2020approximating}, allowing for a cleaner separation between the advantage-guided update and the reference model constraint to better control the balance between them.

\textbf{Single Inference in Training.} Obtaining denoising probabilities (e.g. $\log p_\theta(x_0|x_t)$) at different steps $t$ requires model inference for parametrized policies. This computation leads to significant overhead if denoising probabilities at different time step $t$ are computed via \textit{separate} model inferences. For efficiency, we batch $x_t$ at all sampled steps $t$, and compute the denoising probabilities with \textit{single} model inference. For other models (old and reference model), we apply the same design.

\textbf{Variance-Reduced and Re-weighted Logits.} We follow prior work~\citep{zhu2025llada,ou2025principled,gong2025diffucoder} and apply shared coupled sampling for masked sequences. Specifically, we merge the denoising logits obtained from a masked sequence with the one obtained from its complementary mask to reduce variance. We further reweight the logits using a $1/t$ schedule, which emphasizes updates near the clean sequence. Both designs lead to empirical improvements in our experiments.

\section{Additional Experimental Details}

\subsection{Histogram of Accuracy}
We summarize the test accuracy of our methods compared to existing ELBO-based methods in \Cref{fig:llada_acc}. Our methods GDSD consistently outperform baselines, particularly significant on Dream-7B base models.

\begin{figure}[h]
    \centering
    \includegraphics[width=.49\linewidth]{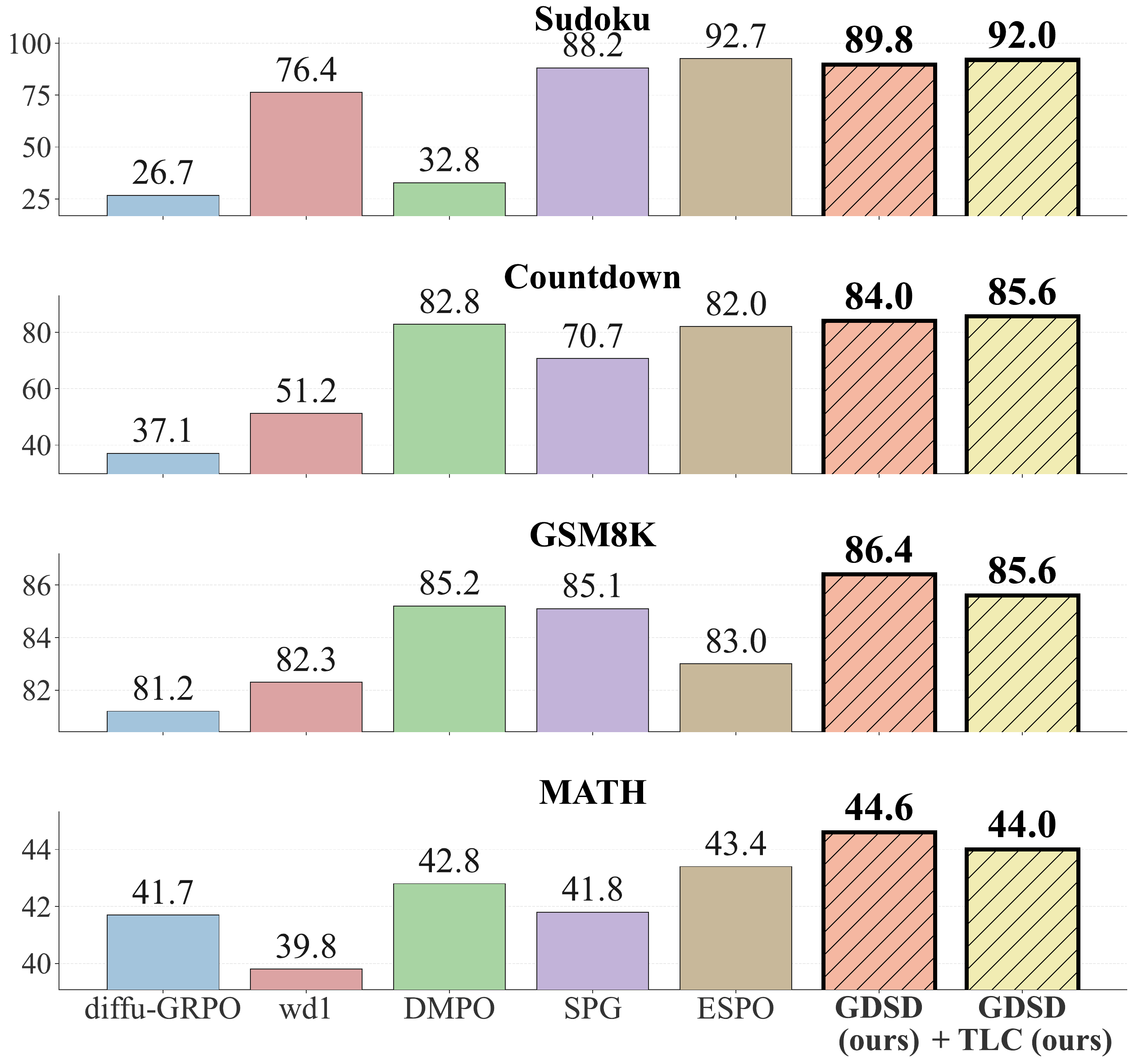}
    \includegraphics[width=.49\linewidth]{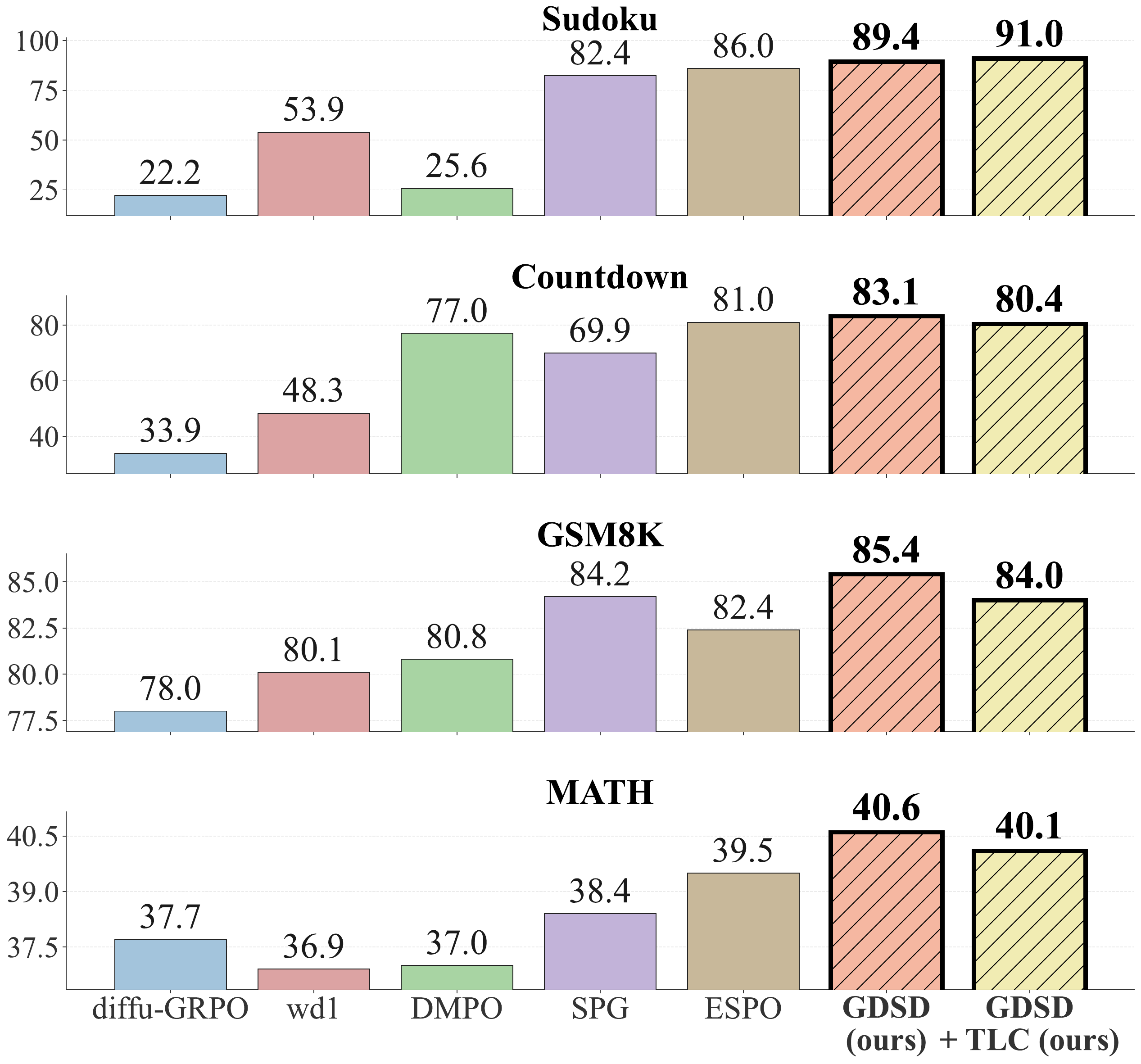}
    \caption{\textbf{Left:} Test accuracy of different methods (best across generation length 128, 256, and 512) on base model LLaDA-8B. \textbf{Right:} Average test accuracy of different methods across generation lengths, on base model LLaDA-8B.}
    \label{fig:llada_acc}
\end{figure}

\input{neurips2026/contents/reward_tlc}

\subsection{Coding Benchmarks}

In this section, we provide additional details and additional results on coding benchmarks.

\input{neurips2026/contents/code_results}

\subsection{Experimental Setups}
\label{append:exp_setup}

We conduct RL fine-tuning based on open-sourced dLLMs, LLaDA-8B-Instruct~\citep{nie2025large} and Dream-v0-Instruct-7B~\citep{ye2025dream}. We experiment on six benchmarks: GSM8K~\citep{DBLP:journals/corr/abs-2110-14168} and
MATH500~\citep{DBLP:conf/iclr/LightmanKBEBLLS24} for mathematical reasoning, Countdown~\citep{tinyzero} and
Sudoku~\citep{arel_sudoku} for logical reasoning, HumanEval~\citep{chen2021evaluatinglargelanguagemodels} and MBPP~\citep{austin2021programsynthesislargelanguage} for coding. The training configurations of logical planning (Countdown and Sudoku) and coding (HumanEval and MBPP) tasks follow the zero-shot setting as in ESPO~\citep{ou2025principled}. While for mathematical reasoning tasks, we found the format reward as d1~\citep{zhao2025d1}, wd1~\citep{tang2025wd1}, and SPG~\citep{wang2025spg} assists in RL performance, thus we follow the reward design in SPG~\citep{wang2025spg}. For coding tasks, we train the base model on AceCoder-87K~\citep{zeng2025acecoderacingcoderrl}.

\paragraph{Evaluation Setup.} We found that the evaluation protocol varies significantly across previous literature, so we aim to unify the evaluation as much as possible for a fair comparison. For logical planning tasks (Sudoku and Countdown), we follow the zero-shot evaluation in~\citep{ou2025principled} and test on generation lengths of 128, 256, and 512. The denoising steps are set to be half of the sequence length. For other tasks, the
number of denoising steps is set equal to the sequence length to improve performance.

\textbf{Remark.} We wish to clarify different evaluation settings in various papers for the future benefit of the community. In terms of reasoning tasks and coding tasks, ESPO respectively evaluates LLaDA with OpenCompass and Dream with lmeval, and they set the diffusion steps as $1\times$ generation length. Meanwhile, other papers like d1~\citep{zhao2025d1}, wd1~\citep{tang2025wd1}, SPG~\citep{wang2025spg} follow a similar evaluation with the codes provided in d1~\citep{zhao2025d1} and set the diffusion steps as $0.5\times$ generation length. As for the planning tasks, both pipelines set the diffusion steps as $0.5\times$ generation length, but d1, wd1, and ESPO follow a 0-shot evaluation for Sudoku, and SPG shuffles the dataset and tests in a 3-shot setting. For fair comparison, we fix zero-shot settings for Sudoku, Countdown, GSM8k, Math500, and humenval and 3-shot settings for MBPP. We fixed the diffusion steps as $1\times$ generation length and evaluated using lmeval. For results previously reported as incompatible with such a setting, we reproduce the results either from the checkpoints given (ESPO) or re-implement the training (SPG). We summarize the evaluation settings as in~\cref{tab:training_configurations}

\input{neurips2026/contents/hyperparameters}

\paragraph{Reward Setups.} We follow SPG~\citep{wang2025spg} for the reward function design, which encourages both correct answers and proper formatting. We provide details as follows.

\textbf{GSM8K.} We follow the Unsloth reward setup\footnote{\url{https://unsloth.ai/blog/r1-reasoning}}: 
\begin{itemize}
    \item XML Structure: +0.125 per correct formatting tag; small penalties for overlong contents after the closing tag.
    \item Formatting: 
    
    (Soft) +0.5 for outputs that have the following content: 
    
    \texttt{<reasoning>...</reasoning><answer>...</answer>}. 
    
    (Strict) +0.5 for exact formatting.
    \item Validity: +0.5 if the answer is valid (an integer).
    \item Correctness: +2.0 if the answer is correct.
\end{itemize}

\textbf{MATH500.} A format reward and a correctness reward are used:
\begin{itemize}
    \item Formatting: 1.00 if <answer></answer> tags are present with \textbackslash boxed inside; 0.75 if answer tags are present without \textbackslash boxed; 0.50 if answer tags are not present but \textbackslash boxed is present; 0.25 if neither the answer tags nor \textbackslash boxed is present.
    \item Correctness: 2.00 if the answer in \texttt{\textbackslash boxed\{\}} is correct.
\end{itemize}

\textbf{Countdown.} The reward covers three cases: +1.0 if the expression reaches the target using the exact numbers; +0.1 if the numbers are correct but does not reach the target; +0.0 otherwise.

\textbf{Sudoku.} The reward is based on cell-level matching against the ground-truth:
\begin{itemize}
    \item Answer Extraction: The model is expected to output the solution with \texttt{<answer>...</answer>} tags, and the digits inside the last answer tag are extracted.
    \item Length Normalization: The extracted answer is normalized to length 16 for the $4\times4$ Sudoku puzzle. If it is shorter than 16 digits, it is padded with zeros; if it is longer, it is truncated.
    \item Correctness: The reward is the fraction of cells compared to originally empty ones that match the ground-truth solution:
    \[
        r = \frac{\#\{\text{correctly filled empty cells}\}}
        {\#\{\text{empty cells}\}}.
    \]
    \item Missing Answer: If no valid answer is extracted, the reward is 0.0.
\end{itemize}

\textbf{Coding.} The reward consists of a format reward and an execution reward:
\begin{itemize}
    \item Formatting: The model output must contain a Markdown code block in the target language, e.g.,
    \texttt{```python ... '''}. The format reward is 1.0 if the code block is formatted correctly and the syntax is valid; 0.5 if correct code block formatting but with a syntax error; 0.0 otherwise.
    \item Execution: Completions with format reward 1.0 are executed. Each generated program is concatenated with the provided test cases and run in a sandboxed execution environment.
    \item Correctness: The execution reward is the test pass rate:
    \[
        r = \frac{\#\{\text{passed test cases}\}}
        {\#\{\text{total test cases}\}}.
    \]
    \item Invalid Code: If the completion fails the format check or has invalid syntax, it is not executed and receives execution reward 0.0.
\end{itemize}

\subsection{Hyperparameter Settings and Implementation Details}
\label{app:exp_hyper}
We follow ESPO~\citep{ou2025principled} for most hyperparameter settings. LoRA (Low-Rank Adaptation) (LoRA) with rank $r=128$  and scaling factor $\alpha=64$ is adopted for training.

For RL rollout, we use a sequence length of 256 tokens, and 128 diffusion steps. We employ confidence-based semi-autoregressive generation with block size 32, and set the temperature as 0.9. We set the number of completions per prompt $g$ as 6, and the number of Monte Carlo estimation samples $m$ as 2 due to computational constraints. Since the rollout stage dominates the training time, the average time per gradient update step for is similar to that of the other baselines. 

We train 3K steps (i.e., number of gradient updates) for GSM8K and MATH500, 2K steps for coding, 5K steps for Sudoku and 10K for countdown. For all RL models, we run evaluation every 100 steps and report the result of the checkpoint with the highest average accuracy.


\subsection{Additional Results Analysis}

\paragraph{Ablation study on Token Logits Centralization.} We plot the reward curve with (w/) and without (w/o) Token Logits Centralization (TLC) on several datasets in~\Cref{fig:tlc_reward_curves}. Results are obtained with all other hyperparameters fixed. Specifically, the curve shows that TLC consistently improves both the final attainable reward and the stability of the optimization process. Across GSM8K, Countdown, and Sudoku, adding TLC leads to a higher and more stable reward plateau, indicating that TLC helps the model obtain reward more effectively during likelihood-free optimization. We summarize the behaviour analysis below:

\begin{itemize}
    \item \textbf{TLC reduces unstable reward optimization.} In the experiment of GSM8k (\Cref{fig:tlc_reward_gsm8k}),  the model w/o TLC reaches a competitive reward peak at an early stage, but it fails to remain in this high-reward region and later exhibits a noticeable degradation. In contrast, the TLC curve quickly rises and then stays around a high reward level, suggesting that TLC stabilizes optimization after the model discovers rewarding behaviors. A similar pattern was found in Countdown experiments (\Cref{fig:tlc_reward_ctd}): without TLC, the reward curves show clear oscillations and repeated upward/downward fluctuations, while TLC substantially stabilizes the training dynamics and improves the final reward for both LLaDA and Dream. 
    \item \textbf{TLC potentially escapes the model from local minima.} We found an interesting pattern in the reward curve for Sudoku experiments (\Cref{fig:tlc_reward_sudoku}): At the beginning, the model with TLC does not immediately achieve the highest reward and stays around a moderate reward level, roughly around $0.8$. However, after approximately 4k training steps, the TLC undergoes a sharp reward improvement and eventually surpasses the non-TLC baseline. Our checkpoint analysis confirms that this transition corresponds to a substantial improvement in evaluation accuracy: before the jump, the model accuracy is only around $50\%$, while after the jump it increases to around $90\%$, compared with roughly $80\%$ for the model trained without TLC. This suggests that the non-TLC run may converge to a suboptimal local basin, whereas TLC helps the model escape this basin and continue improving. 
\end{itemize}

We believe these empirical results can further support our theoretical derivation: TLC provides an unbiased RL optimization, enabling more stable and effective optimization over ELBO-based and likelihood-based methods.

\textbf{Training Reward Dynamics.} According to the reward dynamics in \Cref{fig:reward_curves} and \Cref{fig:aba_study}, GDSD exhibits more stable and effective training than previous ELBO-based methods. TLC further shows slightly better convergence than direct matching in GDSD. In contrast, SPG on Countdown and ESPO on coding exhibit poor training performance.



\section{Others}
\subsection{Compute Resources}
Experiments are conducted on 8 GPUs, including AMD Instinct MI308X and A800.

\subsection{Broader Impact}

This work studies efficient reinforcement learning for diffusion large language models. By avoiding likelihood estimation and reverse-chain policy gradients, the proposed method can reduce the computational cost and instability of post-training, making alignment and reasoning-oriented training more accessible.

However, stronger and cheaper post-training may also amplify risks associated with language models, including misinformation, spam, academic dishonesty, automated persuasion, and harmful content generation. Since our method relies on reward signals, misspecified rewards may reinforce biases, unsafe behaviors, or undesirable shortcuts.

Models trained with our method should therefore be evaluated beyond reasoning and coding benchmarks, including safety, robustness, bias, hallucination, and misuse-related evaluations. Deployment should be paired with safeguards such as data curation, red-teaming, monitoring, and access controls when appropriate.


%% file: neurips2026/contents/reward_tlc.tex
\begin{figure}[t]
    \centering
    \begin{subfigure}{0.32\textwidth}
        \centering
        \includegraphics[width=\linewidth]{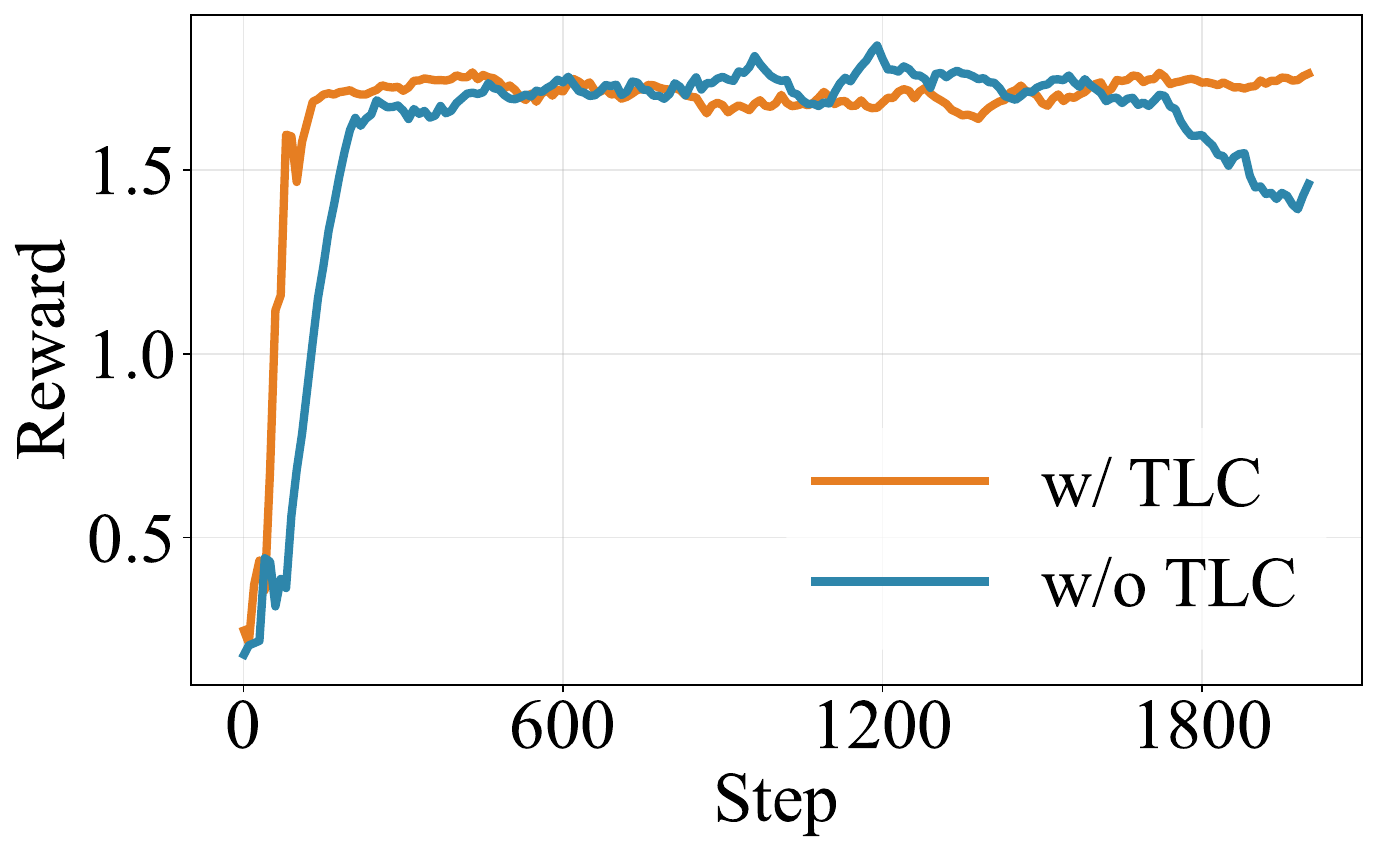}
        \caption{LLaDA-8B trained on GSM8k}
        \label{fig:tlc_reward_gsm8k}
    \end{subfigure}
    \hfill
    \begin{subfigure}{0.32\textwidth}
        \centering
        \includegraphics[width=\linewidth]{figures/reward_curves/tlc_abaltion/ctd_abatlc.pdf}
        \caption{Countdown}
        \label{fig:tlc_reward_ctd}
    \end{subfigure}
    \hfill
    \begin{subfigure}{0.32\textwidth}
        \centering
        \includegraphics[width=\linewidth]{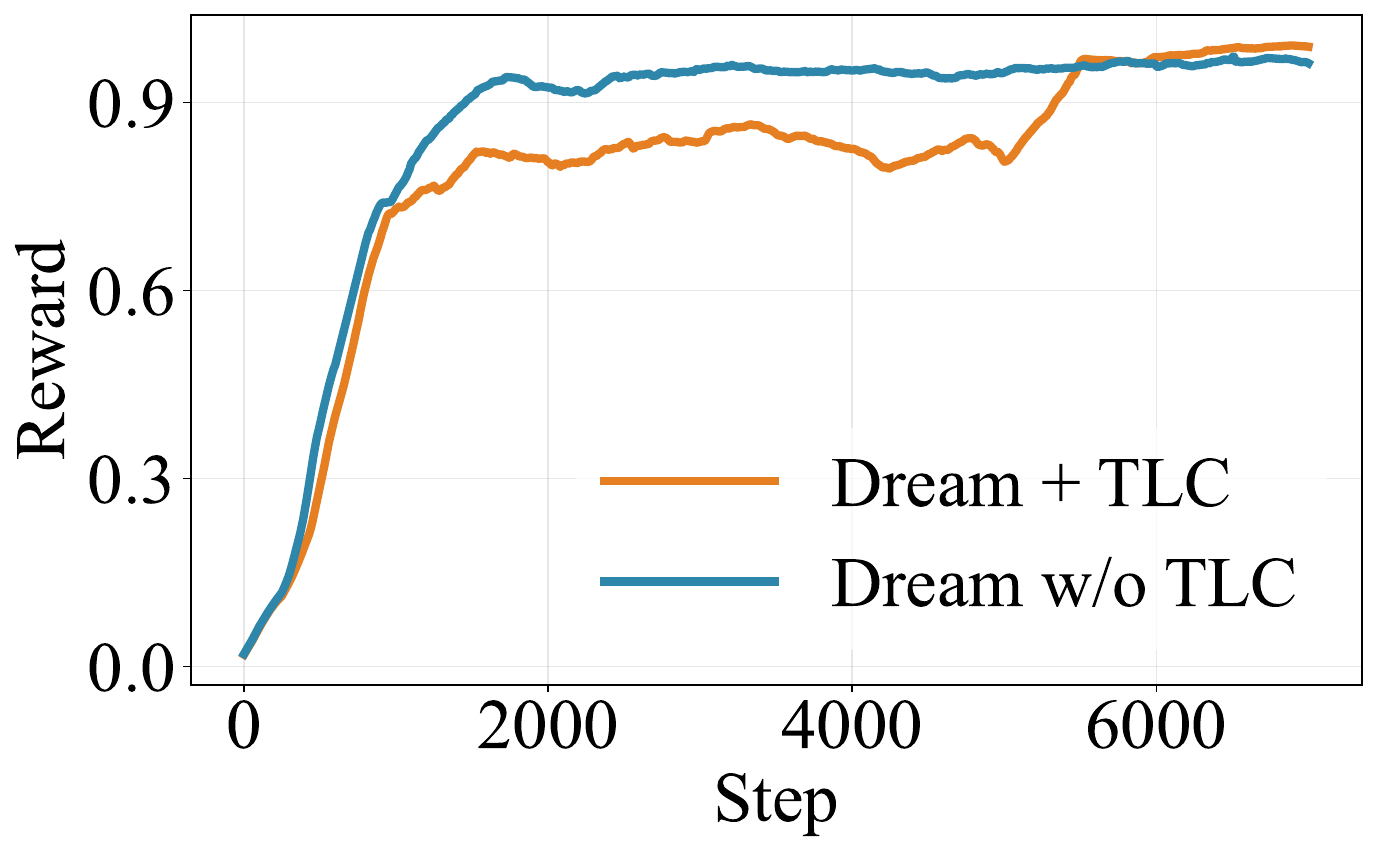}
        \caption{Sudoku}
        \label{fig:tlc_reward_sudoku}
    \end{subfigure}

    \caption{Reward dynamics of GDSD with and without token logits centralization (TLC). TLC consistently improves the stability of training process and leads to a higher reward level.}
    \label{fig:tlc_reward_curves}
\end{figure}

%% file: neurips2026/contents/code_results.tex
\begin{table}[h]
\caption{\textbf{Performance on coding benchmarks.} Results with $\dagger$ denotes the results re-evaluated by pass@1 with lm-eval~\citep{eval-harness} on the checkpoints from ~\citet{ou2025principled} or trained from scratch;  other baseline results are extracted from ~\citet{wang2025spg}. \algname generally improves the average performance over ELBO-based methods.}
\vspace{2mm}
\label{tab:llada_code}
\centering
\scriptsize
\setlength{\tabcolsep}{4pt}
\renewcommand{\arraystretch}{1.1}
\begin{tabular}{l ccc>{\columncolor{gray!20}}c ccc>{\columncolor{gray!20}}c ccc>{\columncolor{gray!20}}c}
\toprule
& \multicolumn{8}{c}{\textbf{HumanEval(0)}} & \multicolumn{4}{c}{\textbf{MBPP(3)}} \\
\cmidrule(lr){2-9} \cmidrule(lr){10-13}
& \multicolumn{4}{c}{\textbf{-}} & \multicolumn{4}{c}{\textbf{Plus}} & \multicolumn{4}{c}{\textbf{-}} \\
\cmidrule(lr){2-5} \cmidrule(lr){6-9} \cmidrule(lr){10-13}
\textbf{Model / Seq Len} & \textbf{128} & \textbf{256} & \textbf{512} & \textbf{Avg.}
                        & \textbf{128} & \textbf{256} & \textbf{512} & \textbf{Avg.}
                        & \textbf{128} & \textbf{256} & \textbf{512} & \textbf{Avg.} \\
\midrule
\textbf{LLaDA}
  & 28.1 & 35.4 & 34.8 & 32.8
  & 23.2 & 30.5 & 41.5 & 31.7
  & 36.2 & 42.0 & 38.1 & 38.8 \\
+ diffu-GRPO (d1)
  & 29.3 & 37.8 & 37.2 & 34.8
  & 22.0 & 29.9 & 37.2 & 29.7
  & 34.8 & 36.6 & 38.0 & 36.5 \\
+ wd1
  & 25.6 & 39.0 & 38.4 & 34.3
  & 29.9 & 29.9 & 32.9 & 30.9
  & 38.0 & 37.2 & 34.4 & 36.5 \\
+ SPG
  & 37.2$^{\dagger}$ & 41.5$^{\dagger}$ & 44.5$^{\dagger}$ & 41.1
  & 32.9$^{\dagger}$ & 34.2$^{\dagger}$ & 37.8$^{\dagger}$ & 35.0
  & 40.4$^{\dagger}$ & 40.8$^{\dagger}$ & 40.4$^{\dagger}$ & 40.5 \\
+ ESPO
  & 42.1$^{\dagger}$ & \textbf{48.2}$^{\dagger}$ & 41.4$^{\dagger}$ & 43.9
  & 24.4 & 36.6 & \textbf{42.7} & 34.6
  & \textbf{43.6}$^{\dagger}$ & 43.2$^{\dagger}$ & 41.2$^{\dagger}$ & 42.7 \\
\midrule
+ \textbf{\algname} (ours)
  & 42.1 & 45.1 & \textbf{45.7} & \textbf{44.3}
  & 36.0 & 38.4 & 41.5 & 38.6
  & 40.6 & 41.8 & \textbf{43.6} & 42.0 \\
+ \textbf{\algname w/ TLC} (ours) 
  & \textbf{43.3} & 43.9 & 43.3 & 43.5
  & \textbf{38.4} & \textbf{39.6} & 39.6 & \textbf{39.2}
  & 43.0 & \textbf{43.6} & 43.2 & \textbf{43.3} \\
\bottomrule
\end{tabular}

\end{table}

%% file: neurips2026/contents/hyperparameters.tex
\begin{table}[htbp]
\label{tab:training_configurations}
  \centering
  \caption{Training Hyperparameters across Different Tasks.}
  \label{tab:training_hyperparameters}
  \resizebox{1\textwidth}{!}{%
    \begin{tabular}{lcccccc}
      \toprule
      \textbf{Hyperparameter} & \textbf{Sudoku} & \textbf{Countdown} & \textbf{GSM8k} & \textbf{Math500} & \textbf{HumanEval} & \textbf{MBPP} \\
      \midrule
      \textbf{Task Category} & Planning & Planning & Reasoning & Reasoning & Coding & Coding \\
      \textbf{Num Iterations ($\mu$)} & 8 & 8 & 8 & 8 & 8 & 8 \\
      \textbf{Train Batch Size} & 10 & 10 & 10 & 8 & 3 & 3 \\
      \textbf{Generation Batch Size} & 8 & 8 & 6 & 8 & 10 & 10 \\
      \textbf{Num Generations} & \multicolumn{6}{c}{1$\times$ Number of GPUs} \\
      \textbf{Num MC ($K$)} & 2 & 2 & 2 & 2 & 4 & 4 \\
      \textbf{Gradient Accumulation} & 4 & 4 & 4 & 4 & 20 & 20 \\
      \textbf{Training Completion Length} & \multicolumn{6}{c}{256} \\
      \textbf{Max Prompt Length} & \multicolumn{6}{c}{400} \\
      \textbf{Training Diffusion Steps} & \multicolumn{6}{c}{128}  \\
      \textbf{Training Temperature} & \multicolumn{6}{c}{1.0} \\
      \midrule
      \textbf{Learning Rate} & 1e-5 & 1e-5 & 3e-6 & 1e-5 & 3e-6 & 3e-6 \\
      \textbf{Training Steps} & 5k & 10k & 3k & 3k & 2k & 2k \\
      \textbf{Learning Rate Scheduler} & \multicolumn{6}{c}{constant\_with\_warmup} \\
      \textbf{Warmup Ratio} & \multicolumn{6}{c}{0.001} \\
      \textbf{Weight Decay} & \multicolumn{6}{c}{0.01} \\
      \textbf{Max Grad Norm} & 0.2 & 0.2 & 0.2 & 0.2 & 0.8 & 0.8 \\
      \midrule
      \textbf{Beta (KL Coef)} & 1e-3 & 5e-4 & 1e-3 & 1e-4 & 5e-2 & 5e-2 \\
      \textbf{Psi ($\psi$)} & \multicolumn{6}{c}{10.0} \\
      \textbf{Epsilon (Clip)} & \multicolumn{6}{c}{0.2} \\
      \textbf{Scale Rewards} & \multicolumn{6}{c}{false} \\
      \midrule
      \textbf{LoRA Rank ($r$)} & \multicolumn{4}{c}{128} &\multicolumn{2}{c}{Full parameter tuning}\\
      \textbf{LoRA Alpha ($\alpha$)} & \multicolumn{4}{c}{64} &\multicolumn{2}{c}{Full parameter tuning}\\
      \textbf{LoRA Dropout} & \multicolumn{4}{c}{0.05} &\multicolumn{2}{c}{Full parameter tuning} \\
      \textbf{Use PEFT} & \multicolumn{4}{c}{true} & \multicolumn{2}{c}{false} \\
      \textbf{Gradient Checkpointing} & \multicolumn{4}{c}{false} & \multicolumn{2}{c}{true} \\
      \bottomrule
    \end{tabular}%
  }
\end{table}

\begin{table}[htbp]
  \centering
  \caption{Unified evaluation settings adopted for fair comparison across tasks and benchmarks.}
  \label{tab:eval_settings}
  \resizebox{1\textwidth}{!}{%
    \begin{tabular}{lcccccc}
      \toprule
      \textbf{Task} & \textbf{Sudoku} & \textbf{Countdown} & \textbf{GSM8k} & \textbf{Math500} & \textbf{HumanEval} & \textbf{MBPP} \\
      \midrule
      \textbf{Task Category} & Planning & Planning & Reasoning & Reasoning & Coding & Coding \\
      \textbf{Evaluation Source Code} & \texttt{d1} & \texttt{d1} & \texttt{lmeval} & \texttt{lmeval} & \texttt{lmeval} & \texttt{lmeval} \\
      \textbf{Num of Shots} & 0-shot & 0-shot & 0-shot & 0-shot & 0-shot & 3-shot \\
      \textbf{Generation Length} & \multicolumn{6}{c}{\{128,256,512\}} \\
      \textbf{Diffusion Steps} & \multicolumn{2}{c}{\{64,128,256\}} &\multicolumn{4}{c}{\{128,256,512\}} \\
      \bottomrule
    \end{tabular}%
  }
\end{table}